\tikzset{main node/.style={circle,fill=blue!20,draw,minimum size=1cm,inner sep=0pt},}
\newcommand{\R}{\mathbb{R}} % real number
\newcommand{\veh}{Q} % vehicle
\newcommand{\cset}{\mathcal{U}}
\newcommand{\cfset}{\mathbb{U}}
\newcommand{\targetset}{\mathcal{L}}
\newcommand{\reachset}{\mathcal{V}}
\newcommand{\dz}{\mathcal{Z}} % danger zone
\newcommand{\Ulo}{\hat{U}} % Control logic matrix
\newcommand{\ulo}{\hat{u}} % control logic matrix element
\newcommand{\rcm}{c} % reward coefficient matrix element
\newcommand{\sv}{s} % safety value
\newcommand{\st}{l}
\newcommand{\set}[1]{\{#1\}}            % Set (as in \set{1,2,3})
\newcommand{\norm}[1]{\left\lVert#1\right\rVert}  % norm of ...  
\newcommand{\target}{\mathcal{T}}
\newcommand{\vehtargetset}{\mathcal{G}}
\newcommand{\numcluster}{K}
\newcommand{\numvehicle}{N}
\newcommand{\numtarget}{M}
\newcommand{\cluster}{\mathcal{H}}
\newcommand{\rom}[1]{\uppercase\expandafter{\romannumeral #1\relax}}
\newtheorem{defn}{Definition}
\newtheorem{thm}{Theorem}
\newtheorem{rmk}{Remark}
\newtheorem{corr}{Corollary}
\title{\LARGE \bf
Reachability-based Safe Planning for Multi-Vehicle Systems with Multiple Targets
}
\author{Jennifer C. Shih, Laurent El Ghaoui
    \thanks{Jennifer Shih and Laurent El Ghaoui are with the Department of Electrical Engineering and Computer Sciences, University of
     California, Berkeley. \{cshih, elghaoui\}@berkeley.edu}
}
\begin{document}

\maketitle
\thispagestyle{empty}
\pagestyle{empty}

\begin{abstract}
Recently there have been a lot of interests in introducing UAVs for a wide range of applications, making
ensuring safety of multi-vehicle systems a highly crucial problem. 
Hamilton-Jacobi (HJ) reachability is a promising tool for analyzing safety of vehicles for low-dimensional systems. However, reachability suffers from the curse of dimensionality, making its direct application to more than two vehicles intractable. Recent works have made it tractable to guarantee safety for 3 and 4 vehicles with reachability. However, the number of vehicles safety can be guaranteed for remains small. In this paper, we propose a novel reachability-based approach that guarantees safety for \textit{any} number of vehicles while vehicles complete their objectives of visiting multiple targets efficiently, given any $\numcluster$-vehicle collision avoidance algorithm where $\numcluster$ can in general be a small number. We achieve this by developing an approach to group vehicles into clusters efficiently and a control strategy that guarantees safety for any in-cluster and cross-cluster pair of vehicles for all time. Our proposed method is scalable to large number of vehicles with little computation overhead. We demonstrate our proposed approach with a simulation on 15 vehicles. In addition, we contribute a more general solution to the 3-vehicle collision avoidance problem from a past recent work, show that the prior work is a special case of our proposed generalization, and prove its validity. 
\end{abstract}

\section{Introduction}
In recent years, there have been vast interests from commercial companies to government agencies in introducing unmanned aerial vehicles (UAVs) into the airspace. For example, Google X \cite{Google2020}, Amazon \cite{Amazon20}, and UPS \cite{UPS2020} have all been developing drone technology for goods transport and delivery. Companies such as Zipline Inc. \cite{Zipline2020} and Vayu Inc. \cite{Vayu2020} utilize drones for delivery of critical medical supplies. The government is also tapping into UAVs for disaster response \cite{DSLRPros2020}, \cite{Humanitarion2020}, \cite{AUVSI16} and military operations \cite{Military2020}. With the burgeoning enthusiasm for this emerging technology, the Federal Aviation Administration recently devised guidelines specifically for UAVs \cite{FAA2020}. Ensuring the safety of UAVs is thus an imminent and highly impactful problem. 
A central problem in UAVs is to have them visit multiple targets for purposes such as delivery of supplies or inspection at different locations. Thus the problem of efficiently enabling all vehicles to accomplish their objectives of visiting multiple targets while maintaining safety at all times is of paramount importance.

The problem of collision avoidance among multi-agent systems has been studied through various methods. For example, \cite{Fiorini98, Vandenberg08} assume that vehicles employ specific simple control strategies to induce velocity obstacles that must be avoided by other vehicles to maintain safety. There have also been approaches that use potential functions to tackle safety while multiple agents travel along pre-determined trajectories \cite{Saber02, Chuang07}. While these approaches offer insights into tackling multi-agent problems, they do not offer the safety guarantees that are highly desirable for safety-critical systems with general dynamical systems. 

Differential game concerns the model and analysis of conflicts in dynamical systems and is a promising tool for safety-critical problems for multi-vehicle systems due to the strong theoretical guarantees it can provide. One such technique is Hamilton-Jacobi (HJ) reachability \cite{Mitchell05}. HJ reachability has been successfully used to guarantee safety for small-scale problems that concern one or two vehicles \cite{Fisac15, Mitchell05}. Despite its favorable theoretical guarantees and applicability to systems with general dynamics, it suffers from the curse of dimensionality because the computation of reachable sets grows exponentially with the number of states in the system and hence the number of vehicles, making its direct application to systems of more than two vehicles intractable. 

There have been many attempts in using differential games to analyze three-player differential games with varying-degree of assumptions on each agent in non-cooperative settings \cite{Tanimoto78, Su14, Fisac15b}. \cite{Shih16} is the first work built on reachability that guarantees safety for three vehicles while vehicles are allowed to execute \textit{any} control when the safe controller does not need to be applied, which endows vehicles more flexibility and is thus preferable in certain scenarios. \cite{Chen17} further builds on \cite{Shih16} to guarantee safety for four vehicles in unstructured settings. However, \cite{Chen17} assumes that vehicles can remove themselves from the environment when conflicts cannot be resolved for all vehicles, which is not always possible and could be undesirable in some situations. In contrast, we propose a control strategy to guarantee safety for four and more number of vehicles without assuming the ability to remove \textit{any} vehicle during conflict resolution. 

Works such as \cite{Chen15, Chen15b} have proposed controllers that guarantee safety for larger number of vehicles by imposing varying degrees of structure on the vehicles, including strong assumptions such as vehicles traveling in a single line of platoon \cite{Chen15b} or vehicles determining their trajectories \textit{a priori} \cite{Chen15}. 
In general, there is a trade-off between the number of vehicles safety can be guaranteed for and how strong the assumption on the structure of the multi-vehicle system is. In this paper, we provide a novel approach based on reachability that guarantees safety for \textit{any} number of vehicles by using less structure than those of \cite{Chen15, Chen15b} for a class of dynamical systems. Although our proposed method adopts more structure than that of \cite{Shih16} and \cite{Chen17}, our approach can guarantee safety for \textit{any} number of vehicles while avoiding having to remove vehicles from the environment when conflict cannot be resolved and retaining some level of unstructuredness. 

Our main contribution is a novel approach to guarantee safety while \textit{any} number of vehicles are tasked with visiting multiple targets for a class of dynamical systems. We first propose a method that assigns vehicles into ``teams" and induces the behavior that vehicles with similar objectives are assigned to the same team for efficiency. We then propose a control strategy to guarantee safety for any pair of vehicles within a team and across different teams, effectively guaranteeing safety for all vehicles. Furthermore, we provide a more general optimization problem that renders three-vehicle collision avoidance safe than that of \cite{Shih16} by establishing a general way of selecting the objective function in the optimization problem in \cite{Shih16} and providing a general proof for this new guideline.

\section{Background}
\label{sec:background}
In this paper, we propose to divide vehicles into teams and present a cooperative control method that guarantees safety for \textit{any} $N$  vehicles while they complete their objectives by building on \textit{any} $\numcluster$-vehicle collision avoidance algorithm where $N$ can in general be much larger than $K$. Our proposed method builds on Hamilton-Jacobi (HJ) reachability. In this section, we provide an overview of HJ reachability and the three-vehicle collision avoidance strategy proposed in \cite{Shih16}.

\subsection{Hamilton-Jacobi (HJ) Reachability \label{sec:HJI}}
HJ reachability is a promising method for ensuring safety. We give a brief overview of how HJ reachability is used to guarantee safety for a pair of vehicles as presented in \cite{Mitchell05}. For any two vehicles $\veh_i$ and $\veh_j$ with dynamics describe by the following ordinary differential equation (ODE)
\begin{equation}
\label{eq:vdyn} % vehicle dynamics
\dot{x}_m = f(x_m, u_m), \quad u_m \in \cset, m = i, j, \\
\end{equation} 
their relative dynamics can be specified by an ODE
\vspace{-0.5em}
\begin{equation}
\label{eq:rdyn} % relative dynamics
\begin{aligned}
\dot{\bar{x}}_{ij} &= g_{ij}(\bar{x}_{ij}, u_i, u_j), u_i,u_j \in \cset
\end{aligned}
\end{equation}
where $\bar{x}_{ij}$ is a relative state representation between $x_i$ and $x_j$ that doesn't necessarily have to be $x_i-x_j$. Note that we are using $\bar{x}_{ij}$ here because we will use $x_{ij} \equiv x_{i} - x_{j}$ throughout the paper. We assume there is a bijection between $x_{ij}$ and $\bar{x}_{ij}$.

In the reachability problem, for any pair of vehicles $\veh_i$ and $\veh_j$, we are interested in determining the backwards reachable set (BRS) $\reachset_{ij}(T)$, the set of states from which there exists no control for $\veh_i$, in the worst case non-anticipative control strategy by $\veh_j$, that can keep the system from entering some final set $\bar{\dz}_{ij}$ within a time horizon $T$. Note we will use the notation $\dz_{ij}$ such that $x_{ij} \in \dz_{ij} \Leftrightarrow \bar{x}_{ij} \in \bar{\dz}_{ij}$. For safety purpose, $\bar{\dz}_{ij}$ represents dangerous configurations between $\veh_i$ and $\veh_j$.

BRS $\reachset_{ij}$ can be mathematically described as
\begin{equation}
\label{eq:brs}
\begin{aligned}
&\reachset_{ij}(t) = \{\bar{x}_{ij}: \forall u_i \in \cfset, \exists u_j \in \cfset, \\
& \bar{x}_{ij}(\cdot) \text{ satisfies \eqref{eq:rdyn}}, \exists s \in [0, t], \bar{x}_{ij}(s) \in \bar{\dz}_{ij}\},
\end{aligned}
\end{equation}
and obtained by $\reachset_{ij}(t) = \bar{x}_{ij}: V_{ij}(t, \bar{x}_{ij}) \le 0\}$ where the details of how to obtain the value function $V_{ij}(t, \bar{x}_{ij})$ is in \cite{Mitchell05}. In this paper, we assume $t \rightarrow \infty$ and write $V_{ij}(\bar{x}_{ij}) = \lim_{t \rightarrow \infty} V_{ij}(t, \bar{x}_{ij})$. If the relative state $\bar{x}_{ij}$ of $\veh_i$ and $\veh_j$ is outside of $\reachset_{ij}$, then $\veh_i$ is safe from $\veh_j$. If $\bar{x}_{ij}$ is at the boundary of $\reachset_{ij}$, \cite{Mitchell05} shows that as long as the optimal control
\begin{equation}
    u_{ij}^* = \arg \max_{u_i \in\cset} \min_{u_j \in\cset} D_{\bar{x}_{ij}} V(\bar{x}_{ij}) \cdot g_{ij}(\bar{x}_{ij},u_i,u_j)
    \label{eq:opt_action}
\end{equation}
is applied immediately, $\veh_i$ is guaranteed to be able to avoid collision with $\veh_j$ over an infinite time horizon.

With the above in mind, we formally define the terms safety level and potential conflict:
\begin{defn}
\textbf{Safety level}: the safety level of vehicle $\veh_i$ with respect to vehicle $\veh_j$ given their relative state $\bar{x}_{ij}$ is defined as $s_{ij} \equiv V_{ij}(\bar{x}_{ij})$. 
\end{defn}
\begin{defn}
\textbf{Potential conflict}: We say vehicle $\veh_i$ is in potential conflict with vehicle $\veh_j$ if the safety level $s_{ij} \leq \st$ for some safety threshold $\st > 0$. The potential conflict is resolved when $s_{ij} > \st$.
\end{defn}

\subsection{Three-vehicle collision avoidance integer linear program (ILP) \label{sec:MIP}}
HJ reachability described in Section \ref{sec:HJI} guarantees safety for $N=2$ vehicle but applying the method directly to $N=3$ vehicles is an intractable task. \cite{Shih16} proposes an integer linear program that provides higher level control logic to guarantee safety for $N=3$ vehicles. Although our proposed method in this paper can be used with any $\numcluster$-vehicle collision avoidance algorithm that resolves potential conflicts while remaining safe, we give a brief overview of the method proposed in \cite{Shih16} because we also contribute a generalization to the approach in \cite{Shih16} in this paper.

At each time step, based on the safety value $s_{ij} \equiv V_{ij}(\bar{x}_{ij})$'s of all pairs of vehicles, \cite{Shih16} designs an integer optimization problem that solves for binary decision variables $\ulo_{ij}$ where $\ulo_{ij} = 1$ indicates that vehicle $\veh_i$ should use the optimal avoidance control $u_{ij}^*$ in Equation \eqref{eq:opt_action} to avoid $\veh_j$; if $\ulo_{ij} = 0$, vehicle $\veh_i$ does not need to avoid $\veh_j$ and can perform any action. The optimization problem has the following form
\begin{equation}
\label{eq:baseMIP}
\begin{aligned}
\max_{\ulo_{ij}} & \sum_{i, j} c_{ij} \ulo_{ij} \\
\text{subject to } & \ulo_{ij} + \ulo_{ji} \le 1 & \forall i, j, i \neq j & \quad (\ref{eq:baseMIP}a) \\
& \sum_j \ulo_{ij} \le 1 & \forall i & \quad (\ref{eq:baseMIP}b) \\
& \ulo_{ij} \in \{0, 1\} & \forall i, j, i \neq j & \quad (\ref{eq:baseMIP}c).
\end{aligned}
\end{equation}
The objective function is linear in the variables $\ulo_{ij}$'s with coefficients $c_{ij}$'s. The constraints (\ref{eq:baseMIP}a) and (\ref{eq:baseMIP}b) enforce only one vehicle in any pair of vehicles should employ avoidance control and each vehicle avoids a maximum of one other vehicle respectively. \cite{Shih16} presents a \textit{specific} numeric choice of $c_{ij}$'s to guarantee three-vehicle safety.

In this paper, we further present a general guideline for choosing the $c_{ij}$'s in the objective function of the integer program (\ref{eq:baseMIP}) and show that as long as $c_{ij}$'s satisfy the criteria we proposed, safety for three vehicles can be guaranteed. This enables a much more general and elegant proof compared to that presented in \cite{Shih16}.

\section{Problem Formulation \label{sec:formulation}}
Consider $N$ vehicles, denoted $\veh_i, i = 1, 2, \ldots, N$, with identical dynamics described by the following ordinary differential equation (ODE)
\vspace{-0.75em}
\begin{equation}
\label{eq:vdyn} % vehicle dynamics
\dot{x}_i = f(x_i, u_i), \quad u_i \in \cset, \quad i = 1,\ldots, N
\end{equation}
\noindent where $x_i \in \R^{n}$ is the state of the $i$th vehicle $\veh_i$, and $u_i$ is the control of $\veh_i$. In this paper, we work with a class of dynamical systems such that the dynamics $f$ can be described completely by a subset of the state and the control input, i.e., we can write $x_i = [x_{i,a} \text{ } x_{i,b}]$ where $x_{i,a} \in \mathbb{R}^{n_a}, x_{i,b} \in \mathbb{R}^{n_b}$, $n_a \geq 1, n_b \geq 0$, such that
\begin{equation}
    \dot{x}_i = f(x_i, u_i) = f_b(x_{i,b}, u_i), \quad u_i \in \cset, \quad i = 1,\ldots, N
    \label{eq:dynamics_custom}
\end{equation}
for some function $f_b$. Note that we will use the subscript "a" or "b" to denote the components of a given state based on the definition above throughout the paper.

Each of the $N$ vehicles is tasked with visiting a set of targets $\vehtargetset_i$, in no particular order, out of a set of $M$ targets $\set{\target_1, \dots, \target_M}$, i.e., $\vehtargetset_i \subseteq \set{\target_1, \dots, \target_M}$. Note that the exact location of each target need not to be known \textit{a priori}. Each vehicle $\veh_i$ must reach all of its targets while at all times avoid the \textit{danger zone} $\dz_{ij}$ with respect to any other vehicle $\veh_j, j = 1, \ldots, N, j \neq i$. The danger zone $\dz_{ij}$ represents relative configuration between $\veh_i$ and $\veh_j$ that are considered undesirable, such as collision. In this paper, we assume the danger zone 
$\dz_{ij}$ for each pair of vehicles can be identically defined by a \textit{norm} function on the $x_{ij,a}$ component of the relative state $x_{ij}$,  $d(x_{ij, a}): \mathbb{R}^{n_a} \rightarrow \mathcal{R}^{+}$ where $x_{ij,a} \equiv x_{i,a} - x_{j,a}$. In particular, the danger zone 
$\dz_{ij}$ is defined such that $x_{ij} \in \dz_{ij} \Leftrightarrow d(x_{ij, a}) \leq R_{ij}$ where $R_{ij}$ is some positive real number. Note that in this paper, we assume $R_{ij} = R_{ji}$ for any pair of vehicles $\veh_i, \veh_j$.
\begin{rmk}
Many practical and common dynamical systems have dynamics structures outlined in Equation (\ref{eq:dynamics_custom}), such as the 2D point system \cite{Royo19}, 3D Dubins Car \cite{Shih16}, 6D Quadrotor \cite{Royo19},  6D Acrobatic Quadrotor \cite{Gillula11}, 7D Quadrotor \cite{Royo19}, and 10D near-hover quadrotor \cite{Bouffard12}. In addition, for all these dynamical systems, defining the danger zone based on the $x_a$ component of the state $x$ makes intuitive sense as the $x_a$ components represent the x, y, z translational coordinates of these systems, which is what we generally use to define collisions among vehicles. 
\end{rmk}

Given the vehicle dynamics in \eqref{eq:vdyn}, the derived relative dynamics in \eqref{eq:rdyn}, the danger zones $\dz_{ij}, i,j = 1, \ldots, N, i \neq j$, and the sets of targets each vehicle $\veh_i$ needs to go through $\vehtargetset_i, i=1,\dots, N$, we propose a cooperative planning and control strategy that:
\begin{enumerate}
\item assigns vehicles to clusters (teams) based on their objectives;
\item determines the initial states of all vehicles;
\item guarantees safety for all vehicles for all time.
\end{enumerate}
\begin{rmk}
In this paper, we will use the terms ``cluster" and ``team" interchangeably.
\end{rmk}

Our proposed method guarantees that all vehicles will be able to stay out of the danger zone with respect to any other vehicle regardless of the number of vehicles $N$ in the environment. Additionally our method guarantees safety for all vehicles without vehicles having to remove themselves from the environment when conflicts cannot be resolved, as assumed in \cite{Chen17}. For all initial configurations, target locations, and objectives of each vehicle in our simulations, all vehicles also complete their objectives of visiting all their targets successfully. 

\section{Methodology \label{sec:method}}
Our proposed method consists of two phases: first, we develop the notion of \textit{teams (clusters)} of vehicles and present a method to assign vehicles to teams based on their targets, with the goal of minimizing the time it takes for all vehicles to complete their objectives. Second, we propose the idea of \textit{augmented} danger zone for each pair of teams. Based on this, we propose a control strategy to ensure safety for any pair vehicles on the same team and across different teams, which in combination guarantees safety for all vehicles. 

\subsection{Assignment of vehicles to clusters \label{sec:cluster_assign}}
We first propose an optimization problem that assigns the $N$ vehicles to $\numcluster$ teams, $\cluster_1, \dots, \cluster_{\numcluster}$. Each vehicle should be assigned to exactly one cluster and the objective of each cluster is then to visit, in no particular order, the union of the sets of targets of the vehicles in this cluster.
Since we aim to have our approach be applicable to scenarios where the location of each target is not known \textit{a priori}, we assume that the amount of time a cluster takes to complete its objective is proportional to the number of targets each cluster needs to visit and we don't consider the order in which each cluster visits its targets during the planning process in this paper. With this in mind, we formulate the objective function of the proposed optimization problem to
minimize the maximum number of targets each cluster needs to visit, which load-balances the number of targets each cluster should visit by grouping vehicles with similar objectives into the same cluster. Furthermore, we show that the proposed optimization problem can be converted into an integer linear program and thus solved efficiently with standard integer program solvers.  

Recall that each vehicle $\veh_i$'s objective is to visit a set of targets $\vehtargetset_i$ where $\vehtargetset_i \subseteq \set{\target_1, \dots, \target_{\numtarget}}$. Based on this, we define binary variables $e_{ij}$,  $i \in \set{1, \dots, \numvehicle}, j \in \set{1, \dots, \numtarget}$, such that $e_{ij} = \mathds{1} \{ \target_j \in \vehtargetset_i\}$ \footnote{$\mathds{1}(\mathcal{A})$ is an indicator function on event $\mathcal{A}$ such that $\mathds{1}(\mathcal{A}) = 1$ if $\mathcal{A}$ is true and $\mathds{1}(\mathcal{A}) = 0$ otherwise.}. Next we define optimization variables $y_{ik}$, $i \in \set{1, \dots, \numvehicle}, k \in \set{1, \dots, \numcluster}$, which are also binary variables. $y_{ik}=1$ means that vehicle $\veh_i$ is assigned to cluster $\cluster_k$, and $y_{ik}=0$ otherwise. Based on the goal of minimizing the maximum number of targets each cluster needs to visit as described in the previous paragraph, we propose the following optimization problem to solve for $y_{ik}$'s:
\begin{equation}
        \label{eq:orig_opt_problem}
        \begin{aligned}
          & \underset{y_{ik}}{\text{min}}
          & & \underset{k}{\text{max}} \left( \sum_{j=1}^{M} \underset{i}{max} \{ e_{ij} y_{ik} \} \right) \\
        & \text{subject to}
        & & \sum_{k=1}^{\numcluster} y_{ik} = 1, \forall i \in \set{1, \dots, \numvehicle} \\
        & & &  y_{ik} \in \set{0, 1}, \forall i \in \set{1, \dots, \numvehicle}, \forall k \in \set{1, \dots, \numcluster}
        \end{aligned}
\end{equation}
Note that due to space constraints under the $min, max$ notations in the objective, we omit that we're optimizing over $y_{ik}, \forall i \in \set{1, \dots, \numvehicle}, \forall k \in \set{1, \dots, \numcluster}$ for the minimization and $k, \forall k \in \set{1, \dots, \numcluster}$ and $i, \forall i \in \set{1, \dots, \numvehicle}$ for the maximization in the above optimization problem.

The summation $\sum_{j=1}^{M} \underset{i}{max} \{ e_{ij} y_{ik} \}$ is equivalent to the total number of targets that cluster $\cluster_k$ needs to visit. To see this, for a given target $\target_j$, the term $ e_{ij} y_{ik}$ in the summation equals to $1$ if vehicle $\veh_i$ needs to visit target $\target_j$ \textit{and} $\veh_i$ is assigned to cluster $\cluster_k$. $e_{ij} y_{ik} = 0$ otherwise. Hence $\underset{i}{max} \{ e_{ij} y_{ik} \}$ equals to $1$ if at least one vehicle assigned to cluster $\cluster_k$ needs to visit target $\target_j$. $\underset{i}{max} \{ e_{ij} y_{ik} \} = 0$ otherwise. Summing $\underset{i}{max} \{ e_{ij} y_{ik} \}$ over all targets gives the total number of targets cluster $\cluster_k$ needs to visit. 

Next we show that the optimization problem \eqref{eq:orig_opt_problem} can be converted into a standard integer linear program by introducing a slack variable and an inequality constraint for each of the maximization operations in the objective.
\begin{equation*}
        \label{eq:converted_opt_problem}
        \begin{aligned}
          & \underset{y_{ik}, o_{kj}, O}{\text{min}}
          & & O \\
        & \text{subject to}
        & & \sum_{k=1}^{\numcluster} y_{ik} = 1, \forall i \in \set{1, \dots, \numvehicle} \\
        & & &  y_{ik} \in \set{0, 1}, \forall i \in \set{1, \dots, \numvehicle}, \forall k \in \set{1, \dots, \numcluster} \\
        & & & e_{ij} y_{ik} \leq o_{kj}, \forall i \in \set{1, \dots, \numvehicle}, \forall k \in \set{1, \dots, \numcluster},  \\ 
        & & & \forall j \in \set{1, \dots, \numtarget} \\
        & & & \sum_{j=1}^{M} o_{kj} \leq O, \forall k \in \set{1, \dots, \numcluster}.
        \end{aligned}
\end{equation*}
The above integer linear problem can be solved efficiently by off-the-shelf integer program solvers. Once solved, the values of $y_{ik}$'s are the solution to the team assignment problem. This completes the first step of the planning process. 

\subsection{Collision Avoidance Protocol Design}
In this section, we present our proposed control strategy that ensure all vehicles remain safe when completing their objectives after the vehicles have been assigned to teams. Specifically, given any $\numcluster$-vehicle collision avoidance algorithm that guarantees safety when resolving potential conflicts among $\numcluster$ vehicles, we propose a general way to initialize vehicle locations and a safe control strategy such that the following always hold for $\numvehicle$ vehicles where $\numvehicle$ can be much larger than $\numcluster$:
\begin{itemize}
    \item Any vehicle is safe from any other vehicle within the same cluster.
    \item Any vehicle in a cluster is safe from any vehicle in any other cluster.
\end{itemize}

\subsubsection{Guaranteed safety for all vehicles within the same cluster}
We first prove a theorem that motivates the control strategy that enables any pair of vehicles in the same cluster to remain safe from each other. 
\begin{thm}
    \label{thm:dist_same}
    Give the structure of the dynamics and the danger zone defined in Section \ref{sec:formulation}, for any two vehicles $\veh_i$ and $\veh_j$, if the initial states $x_i(t_0), x_j(t_0)$ of the two vehicles satisfy $d(x_{ij,a}(t_0)) > R_{ij}$ and $x_{i,b}(t_0) = x_{j,b}(t_0)$ and the controls of the vehicles satisfy $u_i(t) = u_j(t)\text{ } \forall t \geq t_0$, then vehicles $\veh_i$ and $\veh_j$ will remain safe from each other for all $t > t_0$.
\end{thm}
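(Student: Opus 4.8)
The plan is to exploit the special structure of the dynamics in \eqref{eq:dynamics_custom} --- namely that $\dot x_i$ depends on the state only through the ``$b$''-component $x_{i,b}$ and on $u_i$ --- to show that the relative ``$a$''-component $x_{ij,a}(t)$ is \emph{frozen} at its initial value. Once that is established, $d(x_{ij,a}(t)) = d(x_{ij,a}(t_0)) > R_{ij}$ for all $t\geq t_0$, which by the definition of $\dz_{ij}$ (i.e. $x_{ij}\in\dz_{ij}\Leftrightarrow d(x_{ij,a})\le R_{ij}$) is exactly the assertion that $\veh_i$ and $\veh_j$ never enter each other's danger zone.

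First I would split $f_b$ into its ``$a$'' and ``$b$'' blocks, writing $\dot x_{i,a} = f_{b,a}(x_{i,b},u_i)$ and $\dot x_{i,b} = f_{b,b}(x_{i,b},u_i)$, and likewise for $\veh_j$. The first substantive step is to show the ``$b$''-components agree for all time: since $u_i(t)=u_j(t)$ for $t\geq t_0$, the curves $x_{i,b}(\cdot)$ and $x_{j,b}(\cdot)$ both solve the same (non-autonomous) ODE $\dot z = f_{b,b}(z,u_i(t))$ with the same initial condition $x_{i,b}(t_0)=x_{j,b}(t_0)$, so by uniqueness of solutions $x_{i,b}(t)=x_{j,b}(t)$ for all $t\geq t_0$. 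The second step is then immediate: $\dot x_{i,a}(t) = f_{b,a}(x_{i,b}(t),u_i(t)) = f_{b,a}(x_{j,b}(t),u_j(t)) = \dot x_{j,a}(t)$, hence $\tfrac{d}{dt}\bigl(x_{i,a}(t)-x_{j,a}(t)\bigr)=0$ and therefore $x_{ij,a}(t)\equiv x_{ij,a}(t_0)$, giving $d(x_{ij,a}(t)) = d(x_{ij,a}(t_0)) > R_{ij}$ and $x_{ij}(t)\notin\dz_{ij}$ for every $t>t_0$.

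The only delicate point --- the main ``obstacle'' --- is the uniqueness argument in the first step, which needs the vehicle dynamics $f$ (equivalently $f_b$, hence $f_{b,b}$) to be regular enough in the state (e.g. Lipschitz, with the control entering measurably) that the initial value problem driven by the common control $u_i(\cdot)$ has a unique solution. This is precisely the standard well-posedness hypothesis already underlying the HJ reachability setup of Section~\ref{sec:HJI}, so I would simply invoke it rather than reprove it. Everything else reduces to a one-line consequence of the block structure of $f_b$ together with the fact that $d$ depends on the relative state only through $x_{ij,a}$.
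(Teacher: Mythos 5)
Your proof is correct and follows essentially the same route as the paper's: show the two vehicles' state derivatives coincide so that the relative $a$-component $x_{ij,a}$ is frozen at its initial value, then conclude $d(x_{ij,a}(t)) = d(x_{ij,a}(t_0)) > R_{ij}$ for all $t > t_0$. Your version is in fact slightly more careful than the paper's, which asserts $f_b(x_{i,b}(t),u_i(t)) = f_b(x_{j,b}(t),u_j(t))$ for all $t \geq t_0$ without explicitly justifying that the equality $x_{i,b}(t)=x_{j,b}(t)$ persists past $t_0$ --- the ODE-uniqueness step you supply is exactly what closes that (minor) gap.
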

\begin{proof}
    Given that $x_{i,b}(t_0) = x_{j,b}(t_0)$ and $u_i(t) = u_j(t) \text{ } \forall t \geq t_0$, we have that at any time $t \geq t_0$, $\dot{x}_i(t) = f(x_i(t), u_i(t)) = f_b(x_{i,b}(t), u_i(t)) = f_b(x_{j,b}(t), u_j(t)) = f(x_j(t), u_j(t)) = \dot{x}_j(t)$. Because $\dot{x}_{ij,a}(t) = 0 \text{ } \forall t \geq t_0$, $x_{ij,a}(t) = x_{ij,b}(t_0) \text{ } \forall t > t_0$. Thus, $d(x_{ij,a}(t)) = d(x_{ij,a}(t_0)) > R_{ij} \text{ } \forall t > t_0$. Since $R_{ij} = R_{ji}$ and $x_{ij}(t) = -x_{ji}(t)$, we have $d(x_{ij,a}(t)) = d(x_{ji,a}(t)) > R_{ij} = R_{ji} \text{ } \forall t \geq t_0$, which proves that $\veh_i$ and $\veh_j$ will remain safe from each other for all $t > t_0$.
\end{proof}
The above shows that if we initialize any pair of vehicles $\veh_i, \veh_j$ in the same cluster such that $x_{i,b}(t_0) = x_{j,b}(t_0)$, vehicles $\veh_i, \veh_j$ start out safe from each other, and that they employ the same control at any time, the two vehicles will continue to remain outside of each other's danger zone for all time. We can directly use this insight to initialize all vehicles in the same cluster such that any pair of vehicles in the same cluster satisfies the above conditions and have all vehicles in the same cluster employ the same control to guarantee safety for all vehicles in the same cluster for all time. 

\subsubsection{Guaranteed safety of any vehicle with respect to any other vehicle in a \textit{different} cluster}
The key idea of our proposed method is that we can think of each cluster $\cluster_k$ as an \textit{imaginary} vehicle with state $x_{\cluster_k}$ and dynamics identical to that of the individual vehicle's dynamics. We propose the concept of \textit{augmented} danger zone between any pair of clusters, which allows us to guarantee that any vehicle in a cluster will remain safe from any vehicle in any other cluster. 

Before we proceed to describe our approach, we first define a few essential terms:
\begin{defn}
\textbf{Maximum vehicle distance to cluster center} for cluster $\cluster_k$ is defined as $R_{\cluster_k} \equiv \underset{i: \veh_i \in \cluster_k}{max} \text{ } d(x_{\cluster_k,a} - x_{i,a})$ where $x_{\cluster_k}$ is the state of the \textit{imaginary} vehicle representing cluster $\cluster_k$.
\end{defn}

\begin{defn}
\textbf{Augmented danger zone} $\mathcal{Z}_{\cluster_k \cluster_l}$ of cluster $\cluster_k$ with respect to  $\cluster_l$ is defined as $x_{\cluster_k \cluster_l} \in \mathcal{Z}_{\cluster_k \cluster_l}  \Leftrightarrow d(x_{\cluster_k \cluster_l, a}) \leq R_{\cluster_k \cluster_l} $ where $x_{\cluster_k \cluster_l} = x_{\cluster_k} - x_{\cluster_l}$ and $R_{\cluster_k \cluster_l} = R_{\cluster_k} + R_{\cluster_l} + \underset{\veh_i \in \cluster_k, \veh_j \in \cluster_l}{max} \text{ } R_{ij}$. And note that $x_{\cluster_k \cluster_l} \in \mathcal{Z}_{\cluster_k \cluster_l} \Leftrightarrow \bar{x}_{\cluster_k \cluster_l}  \in \bar{\mathcal{Z}}_{\cluster_k \cluster_l}$.
\end{defn}

\begin{defn}
\textbf{Safety level} of cluster $\cluster_k$ with respect to $\cluster_l$ is defined as $s_{\cluster_k \cluster_l} \equiv V_{\cluster_k \cluster_l}(\bar{x}_{\cluster_k \cluster_l})$ where $V_{\cluster_k \cluster_l}(\bar{x}_{\cluster_k \cluster_l})$ is computed based on reachability computation described in Section \ref{sec:HJI} with dynamics identical to that of the vehicle dynamics and danger zone $\bar{\mathcal{Z}}_{\cluster_k \cluster_l}$.
\end{defn}

Now we prove a result that relates the danger zone of the \textit{imaginary} vehicles representing the clusters and the danger zone of the actual vehicles. 
\begin{thm}
If $x_{\cluster_k \cluster_l} \notin \mathcal{Z}_{\cluster_k \cluster_l}$, then $x_{ij} \notin \mathcal{Z}_{ij}$ for any pair of vehicles $\veh_i, \veh_j$ such that $\veh_i \in \cluster_k$ and $\veh_j \in \cluster_l$.
\label{thm:safe_other_cluster}
\end{thm}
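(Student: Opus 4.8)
The plan is to reduce the claim to the triangle inequality for the norm $d$. Unwinding the definitions, the hypothesis $x_{\cluster_k \cluster_l} \notin \mathcal{Z}_{\cluster_k \cluster_l}$ is exactly the statement $d(x_{\cluster_k \cluster_l, a}) > R_{\cluster_k \cluster_l} = R_{\cluster_k} + R_{\cluster_l} + \max_{\veh_i \in \cluster_k,\, \veh_j \in \cluster_l} R_{ij}$, and the desired conclusion $x_{ij} \notin \mathcal{Z}_{ij}$ is equivalent to $d(x_{ij,a}) > R_{ij}$. So the entire theorem is an inequality about the $a$-components of the relative states, and it should follow purely from the fact that $d$ is a norm together with the Definitions of $R_{\cluster_k}$, $R_{\cluster_l}$, and the augmented radius $R_{\cluster_k \cluster_l}$.

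First I would fix a pair $\veh_i \in \cluster_k$, $\veh_j \in \cluster_l$ and route the relative position through the two imaginary cluster centers, writing $x_{ij,a} = x_{i,a} - x_{j,a} = (x_{i,a} - x_{\cluster_k,a}) + (x_{\cluster_k,a} - x_{\cluster_l,a}) + (x_{\cluster_l,a} - x_{j,a})$, or equivalently $x_{\cluster_k \cluster_l,a} = x_{ij,a} - (x_{i,a} - x_{\cluster_k,a}) - (x_{\cluster_l,a} - x_{j,a})$. Applying the triangle inequality and the symmetry $d(-v) = d(v)$ of the norm to this last identity gives $d(x_{\cluster_k \cluster_l,a}) \le d(x_{ij,a}) + d(x_{i,a} - x_{\cluster_k,a}) + d(x_{\cluster_l,a} - x_{j,a})$. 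By the Definition of maximum vehicle distance to cluster center, $d(x_{i,a} - x_{\cluster_k,a}) \le R_{\cluster_k}$ and $d(x_{\cluster_l,a} - x_{j,a}) \le R_{\cluster_l}$, so rearranging yields $d(x_{ij,a}) \ge d(x_{\cluster_k \cluster_l,a}) - R_{\cluster_k} - R_{\cluster_l}$. Invoking the hypothesis and then $\max_{\veh_i \in \cluster_k,\, \veh_j \in \cluster_l} R_{ij} \ge R_{ij}$ gives $d(x_{ij,a}) > R_{\cluster_k \cluster_l} - R_{\cluster_k} - R_{\cluster_l} = \max_{\veh_i \in \cluster_k,\, \veh_j \in \cluster_l} R_{ij} \ge R_{ij}$, i.e. $x_{ij} \notin \mathcal{Z}_{ij}$, as claimed.

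There is essentially no hard part here; the only ingredients are that $d$ is a genuine norm (so the triangle inequality and $d(-v)=d(v)$ apply) and the three radius Definitions. The one point to be careful about is the direction of the triangle-inequality bound: we need a \emph{lower} bound on $d(x_{ij,a})$, so the decomposition must be arranged so that the quantity we control from below, $x_{\cluster_k \cluster_l,a}$, sits alone on the left while the small within-cluster offset terms are the ones being subtracted off. It is also worth noting explicitly that the argument is purely algebraic and holds pointwise in time, so the conclusion is valid at every instant at which the cluster-center states are defined, which is what lets this theorem be combined with the reachability-based control strategy for the imaginary cluster vehicles.
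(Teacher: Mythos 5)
Your proof is correct and is essentially identical to the paper's: both decompose the cluster-center relative position through the individual vehicle positions, apply the triangle inequality for the norm $d$, bound the within-cluster offsets by $R_{\cluster_k}$ and $R_{\cluster_l}$, and conclude via the definition of $R_{\cluster_k \cluster_l}$. The only difference is cosmetic ordering of the inequality chain.
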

\begin{proof}
Let $r_{\cluster_k} = x_{\cluster_k} - x_{i}$ and $r_{\cluster_l} = x_{\cluster_l} - x_{j}$. Based on the definition of the augmented danger zone $\mathcal{Z}_{\cluster_k \cluster_l}$, we have $x_{\cluster_k \cluster_l} \notin \mathcal{Z}_{\cluster_k, \cluster_l}  \Leftrightarrow d(x_{\cluster_k \cluster_l, a}) > R_{\cluster_k \cluster_l}$. With this in mind, we have 
\begin{align*}
    d(x_{\cluster_k \cluster_l, a})
&= d(x_{\cluster_k,a} - x_{\cluster_l, a}) \\
&= d(x_{i,a} + r_{\cluster_k,a} - x_{j,a} - r_{\cluster_l,a}) \\ 
&\leq d(x_{i,a} - x_{j,a}) + d(r_{\cluster_k,a}) + d(r_{\cluster_l,a})  \\
&\leq d(x_{ij,a}) + R_{\cluster_k} + R_{\cluster_l}
\end{align*}
where the first inequality follows from the triangle inequality on norms and the second inequality follows from the definitions of $R_{\cluster_k}$ and $R_{\cluster_l}$. 
Hence we have
\begin{align*}
     d(x_{ij,a}) + R_{\cluster_k} + R_{\cluster_l} 
     &\geq  d(x_{\cluster_k \cluster_l, a}) \\
     & > R_{\cluster_k \cluster_l} \\
     & = R_{\cluster_k} +R_{\cluster_l}+ \underset{\veh_i \in \cluster_k, \veh_j \in \cluster_l}{max} \text{ } R_{ij} \\
     &\geq R_{\cluster_k} +R_{\cluster_l}+ R_{ij}.
\end{align*}
Subtracting $R_{\cluster_k} +R_{\cluster_l}$ from both sides results in $d(x_{ij,a}) >  R_{ij}$, which implies $x_{ij} \notin \mathcal{Z}_{ij}$, as desired.
\end{proof}

\begin{corr}
Suppose at time $t=t_0$, for any cluster $\cluster_k$, $x_{\cluster_k, b}(t_0) = x_{i,b}(t_0)$ for all $i$ such that $\veh_i \in \cluster_k$. We apply the $\numcluster$-vehicle collision avoidance strategy that guarantees safety on the $\numcluster$ imaginary vehicles representing the $\numcluster$ clusters when resolving potential conflicts. If the strategy suggests to apply $u^{\star}_{\cluster_k}$ to the imaginary vehicle representing cluster $\cluster_k$, then in additional to applying this control on the imaginary vehicle, we also apply this control to all vehicles in this cluster. Given the aforementioned assumptions and the control strategy, if at time $t=t_0$, any pair of  imaginary vehicles representing two distinct clusters $\cluster_k, \cluster_l$ are not in potential conflict with each other, for any pair of vehicles $\veh_i \in \cluster_k, \veh_j \in \cluster_l$, $\veh_i$ will remain safe from $\veh_j$ for all time $t \geq t_0$.
\label{corr:safe_other_cluster}
\end{corr}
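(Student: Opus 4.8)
The plan is to show that, under the proposed control law, each cluster's imaginary vehicle moves rigidly with all the real vehicles it represents, so that the cluster ``radius'' $R_{\cluster_k}$ — and hence the augmented danger zone $\bar{\mathcal{Z}}_{\cluster_k \cluster_l}$ and its reachability value function $V_{\cluster_k \cluster_l}$ — never changes with time; then to invoke the assumed safety guarantee of the $\numcluster$-vehicle strategy to conclude that the imaginary vehicles never enter one another's augmented danger zones; and finally to apply Theorem~\ref{thm:safe_other_cluster} at each time instant to pull this conclusion back to the real vehicles.

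First I would fix a cluster $\cluster_k$ and a vehicle $\veh_i \in \cluster_k$ and observe that, by hypothesis, $x_{i,b}(t_0) = x_{\cluster_k,b}(t_0)$, while by construction the same control $u^{\star}_{\cluster_k}(t)$ is fed to $\veh_i$ and to the imaginary vehicle of $\cluster_k$ for all $t \ge t_0$. Since both obey $\dot{x} = f_b(x_b, u)$, the argument in the proof of Theorem~\ref{thm:dist_same} (with $\veh_j$ there replaced by the imaginary vehicle) gives $\dot{x}_i(t) = \dot{x}_{\cluster_k}(t)$ for all $t \ge t_0$, hence $x_i(t) - x_{\cluster_k}(t)$ is constant on $[t_0,\infty)$; in particular $d(x_{\cluster_k,a}(t) - x_{i,a}(t))$ is constant, so taking the maximum over $\veh_i \in \cluster_k$ shows $R_{\cluster_k}$ is time-invariant. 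Consequently every $R_{\cluster_k \cluster_l} = R_{\cluster_k} + R_{\cluster_l} + \max_{\veh_i \in \cluster_k, \veh_j \in \cluster_l} R_{ij}$ is a fixed constant, so $\bar{\mathcal{Z}}_{\cluster_k \cluster_l}$, $V_{\cluster_k \cluster_l}$, and $s_{\cluster_k \cluster_l}$ are all well defined exactly as in Section~\ref{sec:HJI}.

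Next I would note that the $\numcluster$ imaginary vehicles form a self-contained $\numcluster$-vehicle system evolving under the given $\numcluster$-vehicle collision-avoidance strategy with danger zones $\bar{\mathcal{Z}}_{\cluster_k \cluster_l}$. Since at $t_0$ no pair $\cluster_k, \cluster_l$ is in potential conflict, $s_{\cluster_k \cluster_l}(t_0) > \st > 0$; as the backwards reachable set $\{V_{\cluster_k \cluster_l} \le 0\}$ contains $\bar{\mathcal{Z}}_{\cluster_k \cluster_l}$, every pair starts strictly outside its augmented danger zone, so the assumed safety guarantee yields $x_{\cluster_k \cluster_l}(t) \notin \mathcal{Z}_{\cluster_k \cluster_l}$ for all $t \ge t_0$ and all $k \ne l$. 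Then, fixing any $t \ge t_0$, distinct clusters $\cluster_k, \cluster_l$, and vehicles $\veh_i \in \cluster_k$, $\veh_j \in \cluster_l$, Theorem~\ref{thm:safe_other_cluster} — whose proof is a static inequality valid at every instant once the first step guarantees that its defining quantities are in force — gives $x_{ij}(t) \notin \mathcal{Z}_{ij}$, i.e.\ $\veh_i$ is outside $\veh_j$'s danger zone at time $t$; since $t$ is arbitrary, this proves $\veh_i$ stays safe from $\veh_j$ for all $t \ge t_0$.

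I expect the main obstacle to be the rigidity argument of the first step: one must be sure the imaginary vehicle tracks its cluster exactly, so that $R_{\cluster_k}$, and thus the augmented danger zone and its value function, are genuinely constant in time — otherwise applying the $\numcluster$-vehicle reachability machinery to the imaginary vehicles, and invoking Theorem~\ref{thm:safe_other_cluster} at a generic time $t$, would both be unjustified. Once that point is pinned down, the remainder is simply chaining the assumed $\numcluster$-vehicle safety guarantee with Theorem~\ref{thm:safe_other_cluster}.
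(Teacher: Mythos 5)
Your proposal is correct and follows essentially the same route as the paper: establish that the rigid co-motion of each cluster with its imaginary vehicle keeps $R_{\cluster_k}$ (hence $R_{\cluster_k \cluster_l}$ and the augmented danger zone) from growing, invoke the assumed $\numcluster$-vehicle safety guarantee on the imaginary vehicles, and then apply Theorem~\ref{thm:safe_other_cluster} at each time instant. The only difference is that the paper treats $R_{\cluster_k}$ as non-increasing rather than constant, to account for vehicles being dropped from a cluster once they finish visiting their targets, whereas you assume fixed membership and obtain exact time-invariance; both suffice for the argument.
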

\begin{proof}
First we note that it is only possible to have the same or less number of vehicles in a cluster as time proceeds because a vehicle is allowed to stay at its final target once it completes visiting all its targets.
In addition, the same control is applied to all vehicles in any cluster $\cluster_k$ and the imaginary vehicle representing $\cluster_k$.
Thus the maximum vehicle distance to cluster center $R_{\cluster_k}$ for each cluster $\cluster_k$ is non-increasing throughout execution, which means that the radius $R_{\cluster_k \cluster_l}$ defining the \textit{augmented} danger zone between any two distinct clusters $\cluster_k, \cluster_l$ is non-increasing. By applying the $\numcluster$-vehicle collision avoidance control strategy on the \textit{imaginary} vehicles representing the $\numcluster$ clusters, we know that for any distinct clusters $\cluster_k, \cluster_l$, we have $x_{\cluster_k \cluster_l} \notin \mathcal{Z}_{\cluster_k \cluster_l}$ for all $t \geq t_0$ under the assumption that they are not in potential conflict initially. Applying Theorem \ref{thm:safe_other_cluster}, we have that $x_{ij} \notin \mathcal{Z}_{ij}$ for any vehicle $\veh_i \in \cluster_k, \veh_j \in \cluster_l$, which implies that any vehicle with respect to any vehicle in another cluster will remain safe from each other for all $t \geq t_0$.
\end{proof}

With the above in mind, we summarize our proposed overall initialization and cooperative control strategy for all vehicles to visit all their targets safely for all time:
\begin{itemize}
    \item (1) Initialize all vehicles such that for any pair of vehicles $\veh_i, \veh_j$ in the same cluster $\cluster_k$, $x_{ij} \notin \mathcal{Z}_{ij}$ and $x_{\cluster_k,b}(t_0)=x_{i,b}(t_0) = x_{j,b}(t_0)$. Additionally, any two distinct clusters $\cluster_k$ and $\cluster_l$ are initialized so that the imaginary vehicles representing them are not in potential conflict with each other.
    \item (2) At any time $t$, for any cluster $\cluster_k$, if the $\numcluster$-vehicle collision avoidance algorithm determines it's necessary to apply the optimal safety controller, then all vehicles in $\cluster_k$ apply this safe control; if the $\numcluster$-vehicle collision avoidance algorithm determines that no safety control is needed at this time step, all vehicles in $\cluster_k$ apply the target controller that gets the cluster to its next target. 
\end{itemize}

The target controller is obtained by first computing the optimal control, up to discretization accuracy, to reach the goal for any relative state of a vehicle and the goal within a finite grid using reachability offline. Online, all is needed to get the current target control is to look up the optimal control using the current relative state of the cluster and its next goal location. Hence the target locations need not to be known \textit{a priori}.

\begin{corr}
Give the control strategy outlined above, all vehicles will remain safe from each other for all time.
\end{corr}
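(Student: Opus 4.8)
The plan is to decompose the claim into the two regimes already handled by the preceding results and glue them together with a case split. Fix an arbitrary unordered pair of distinct vehicles $\veh_i, \veh_j$. By step (1) of the strategy together with the assignment program of Section \ref{sec:cluster_assign}, every vehicle belongs to exactly one cluster, so either $\veh_i$ and $\veh_j$ lie in the same cluster or they lie in two distinct clusters; these cases are exhaustive. In each case I would simply check that the hypotheses of the relevant earlier statement are met and invoke it.

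For the in-cluster case, say $\veh_i, \veh_j \in \cluster_k$. Step (1) initializes them so that $x_{ij}(t_0) \notin \dz_{ij}$, i.e.\ $d(x_{ij,a}(t_0)) > R_{ij}$, and $x_{i,b}(t_0) = x_{j,b}(t_0)$. Step (2) applies a single control to \emph{every} vehicle of $\cluster_k$ at every instant — either the optimal safety controller returned by the $\numcluster$-vehicle algorithm for the imaginary vehicle of $\cluster_k$, or the cluster's target controller — so $u_i(t) = u_j(t)$ for all $t \geq t_0$. These are exactly the hypotheses of Theorem \ref{thm:dist_same}, which yields that $\veh_i$ and $\veh_j$ remain safe from each other for all $t > t_0$, and the initialization handles $t = t_0$. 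For the cross-cluster case, say $\veh_i \in \cluster_k$, $\veh_j \in \cluster_l$ with $k \neq l$. Step (1) initializes each cluster so that $x_{\cluster_k,b}(t_0) = x_{m,b}(t_0)$ for all $\veh_m \in \cluster_k$, and so that the imaginary vehicles of $\cluster_k$ and $\cluster_l$ are not in potential conflict at $t_0$; step (2) runs the $\numcluster$-vehicle algorithm on the imaginary vehicles and mirrors its recommended control onto all cluster members. These are precisely the hypotheses of Corollary \ref{corr:safe_other_cluster}, giving that $\veh_i$ stays safe from $\veh_j$ for all $t \geq t_0$. Since one of the two cases applies to every pair, all vehicles remain safe for all time.

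The only place where anything beyond bookkeeping occurs — and hence the main point to get right — is justifying that step (2) really delivers the ``same control to all vehicles in a cluster'' required by both Theorem \ref{thm:dist_same} and Corollary \ref{corr:safe_other_cluster} uniformly in time, including across the switches between the target controller and the safety controller (and at the switching instants); and that the black-box property actually used in Corollary \ref{corr:safe_other_cluster} — the $\numcluster$-vehicle algorithm keeps any two imaginary vehicles that start outside potential conflict outside their augmented danger zone using only potential-conflict resolution — matches the assumed guarantee of the $\numcluster$-vehicle collision-avoidance algorithm. Once these are granted, the corollary follows immediately from the case split with no further computation.
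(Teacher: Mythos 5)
Your proof is correct and follows essentially the same route as the paper's: a case split of every vehicle pair into same-cluster and cross-cluster, discharging the former via Theorem \ref{thm:dist_same} and the latter via Corollary \ref{corr:safe_other_cluster}, with exhaustiveness of the two cases closing the argument. Your added remarks on the uniformity of the shared control across controller switches and on the black-box guarantee are reasonable due diligence but go beyond what the paper itself records.
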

\begin{proof}
The above initialization and control strategy satisfy the assumptions of both Theorem \ref{thm:dist_same} and Corollary \ref{corr:safe_other_cluster}. Since the union of any pair of vehicles within the same cluster and across different clusters is exactly all pairs of vehicles, any pair of vehicles will remain safe from each other for all time.
\end{proof}

\subsection{Improvement of generality of proposed optimization problem for 3-vehicle collision avoidance in \cite{Shih16}}

In this section, we provide a more general optimization problem for guaranteeing safety for three vehicles compared to that presented in \cite{Shih16}, which only presents a specific numeric choice of the $c_{ij}$'s in the objective function of ILP (\ref{eq:baseMIP}) that happens to work by verifying through enumeration and brute-force, offering no general guideline on what makes choices of $c_{ij}$'s sufficient for safety guarantees. In this paper, we develop sufficient conditions for $c_{ij}$'s for safety to be guaranteed for $N=3$ vehicles. 

First, we set $c_{ii} = -1 \text{ } \forall i$ and $c_{ij} = -1$ when $\sv_{ij} > \st, i \neq j$ where $\st$ is the safety threshold defined in Section \ref{sec:background}. Next we illustrate our proposed design of $c_{ij}$ when $s_{ij} \leq \st$ with the following theorem.

\begin{thm}
\label{thm:main_result}
Let $N = 3$ and suppose $\sv_{12}, \sv_{23}, \sv_{31} > 0$ at some time $t = t_0$. If the joint control strategy from the integer program \eqref{eq:baseMIP} has reward coefficient elements $\rcm_{ij}$'s that satisfy the conditions, 

\vspace{-0.8em}
\begin{align} \label{eq:rcm_3veh}
    \sv_{12} \leq \st \Rightarrow \rcm_{12} &> \rcm_{13}^+  + \rcm_{21}^+ +
    \rcm_{32}^+ \\
    \sv_{23} \leq \st \Rightarrow  \rcm_{23} &> \rcm_{13}^+ + \rcm_{21}^+ +
    \rcm_{32}^+ \\
    \sv_{31} \leq \st \Rightarrow   \rcm_{31} &> \rcm_{13}^+ + \rcm_{21}^+ +
    \rcm_{32}^+ 
\end{align}
where $\rcm_{ij}^+ = max(\rcm_{ij}, 0)$,
it is guaranteed that $\sv_{12}, \sv_{23}, \sv_{31} > 0 ~ \forall t > t_0$.
\end{thm}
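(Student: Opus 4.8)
The plan is to reduce the infinite‑horizon claim to a single statement about the integer program \eqref{eq:baseMIP} evaluated at one instant, and then recover the full conclusion from two standard ingredients: continuity of the safety levels $\sv_{ij}(\cdot)$ along trajectories, and the monotonicity guarantee of the HJ safety controller. The instantaneous claim I will establish is: \emph{at each instant $t$, and for every pair $(i,j)\in\{(1,2),(2,3),(3,1)\}$ with $\sv_{ij}(t)\le\st$, every optimal solution $\ulo$ of \eqref{eq:baseMIP} has $\ulo_{ij}=1$.} Granting this, fix the pair $(1,2)$ and suppose for contradiction that $\sv_{12}$ reaches $0$; let $\sigma$ be the first such time, which satisfies $\sigma>t_0$ since $\sv_{12}(t_0)>0$ and $\sv_{12}$ is continuous. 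On a left‑neighborhood $(\sigma-\delta,\sigma)$ we then have $0<\sv_{12}(t)<\st$ (continuity of $\sv_{12}$ at $\sigma$, where it equals $0<\st$, together with $\sv_{12}>0$ on $[t_0,\sigma)$), so the instantaneous claim forces $\ulo_{12}=1$ throughout that interval; since the controller then applies $u_{12}^{*}$ of \eqref{eq:opt_action}, and applying $u_{12}^{*}$ cannot decrease $\sv_{12}=V_{12}(\bar x_{12})$ whatever $\veh_2$ does (the defining guarantee of the HJ safety controller, cf. \cite{Mitchell05}), $\sv_{12}$ is non‑decreasing on $(\sigma-\delta,\sigma)$, so $\sv_{12}(\sigma)\ge\sv_{12}(\sigma-\delta/2)>0$, a contradiction. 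The same reasoning applied to $(2,3)$ and $(3,1)$ yields $\sv_{12},\sv_{23},\sv_{31}>0$ for all $t>t_0$. This is exactly where the strict positivity of $\st$ is used: it guarantees $\sv_{ij}$ must first spend time in the band $(0,\st]$, on which avoidance is active, before it could reach $0$.

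For the instantaneous claim I will run an exchange argument on \eqref{eq:baseMIP}. Take the pair $(1,2)$ with $\sv_{12}\le\st$ and suppose an optimal $\ulo$ has $\ulo_{12}=0$. Form $\ulo'$ by setting $\ulo'_{12}=1$ and zeroing exactly the two variables this puts in violation: $\ulo'_{21}=0$, forced by the pair constraint (\ref{eq:baseMIP}a) on $\{1,2\}$, and $\ulo'_{13}=0$, forced by constraint (\ref{eq:baseMIP}b) on vehicle $1$; all other variables are unchanged, and one checks that every remaining constraint still holds, since zeroing $\ulo_{21}$ only relaxes (\ref{eq:baseMIP}b) for vehicle $2$. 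The objective changes by $\rcm_{12}-\rcm_{21}\ulo_{21}-\rcm_{13}\ulo_{13}\ge \rcm_{12}-\rcm_{21}^{+}-\rcm_{13}^{+}>\rcm_{32}^{+}\ge 0$, using the binary range of the variables and hypothesis \eqref{eq:rcm_3veh}; this contradicts optimality, so $\ulo_{12}=1$. The pairs $(2,3)$ and $(3,1)$ are handled identically after relabeling, invoking the second and third inequalities of \eqref{eq:rcm_3veh} respectively (the leftover positive slack is then $\rcm_{13}^{+}$ and $\rcm_{21}^{+}$, which is why the three conditions can share a common right‑hand side). I will also note that these per‑pair conclusions never conflict: when two or three of the cyclic pairs are simultaneously in potential conflict, the assignment $\ulo_{12}=\ulo_{23}=\ulo_{31}=1$ (all others $0$) is feasible for \eqref{eq:baseMIP}, so all conflicted cyclic pairs can be protected at once.

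The exchange argument of the second step is routine once the two variables to flip are identified. The part needing care is the first step — stating and invoking the monotonicity property of $\sv_{ij}$ under $u_{ij}^{*}$ correctly in a possibly switching, worst‑case‑over‑$\veh_j$ closed loop, and making the continuity/first‑exit argument airtight (in particular that the supposed first zero of $\sv_{12}$ is approached from the left through the conflict band). I expect this to be the main obstacle, though it is a standard‑type argument, and the threshold $\st>0$ is precisely the slack that makes it go through.
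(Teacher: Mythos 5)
Your proposal is correct, and its overall skeleton matches the paper's: both reduce the theorem to the single instantaneous claim that any cyclic pair in potential conflict ($\sv_{ij}\le\st$) must receive $\ulo^*_{ij}=1$ in every optimal solution of \eqref{eq:baseMIP}, and then propagate positivity of the safety levels forward in time using the HJ guarantee. The two steps are discharged differently, though. For the combinatorial claim the paper enumerates cases according to which of the other two cyclic pairs are also in conflict and reasons over the constraint graph about which feasible assignments maximize the objective; you instead give a single exchange argument --- flip $\ulo_{12}$ to $1$, zero the only two variables this puts in violation ($\ulo_{21}$ via constraint (\ref{eq:baseMIP}a), $\ulo_{13}$ via constraint (\ref{eq:baseMIP}b)), check feasibility, and bound the objective gain below by $\rcm_{12}-\rcm_{21}^{+}-\rcm_{13}^{+}>\rcm_{32}^{+}\ge 0$. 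This is uniform over the cases, shorter, and makes more transparent why the right-hand side of \eqref{eq:rcm_3veh} is an over-estimate of the reward forfeited by the swap; it proves the same statement the paper's case analysis does. For the propagation step the paper simply invokes Proposition 1 of \cite{Shih16}, whereas you re-derive its content via a first-crossing/continuity argument together with monotonicity of $V_{ij}$ under $u_{ij}^*$; that monotonicity is exactly the external ingredient the cited proposition rests on, so the two routes carry the same assumptions. One minor point you should add: the diagonal variables $\ulo_{ii}$ appear in the objective (with $\rcm_{ii}=-1$) and potentially in constraint (\ref{eq:baseMIP}b), so before the exchange is exactly as you describe you need the observation (made explicitly in the paper) that every optimal solution has $\ulo_{ii}=0$; this is immediate and does not affect the argument.
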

\begin{proof}
  Observe that we can use a graph to represent the constraints in ILP \eqref{eq:baseMIP}.
  
   \begin{center}
     \begin{tikzpicture}

       \node[main node, fill=white, minimum size=2pt] (1) {$\ulo_{12}$};
       \node[main node, fill=white, minimum size=2pt] (2) [below left = 0.635cm and 0.415cm of 1]  {$\ulo_{13}$};
       \node[main node, fill=white, minimum size=2pt] (3) [right = 0.835cm of 1] {$\ulo_{21}$};
       \node[main node, fill=white, minimum size=2pt] (4) [below right = 0.635cm and 0.415cm of 3] {$\ulo_{23}$};
       \node[main node, fill=white, minimum size=2pt] (5) [below left = 0.635cm and 0.415cm of 4] {$\ulo_{32}$};
       \node[main node, fill=white, minimum size=2pt] (6) [left = 0.835cm of 5] {$\ulo_{31}$};

       \path[draw,thick]
       (1) edge node { } (2)
       (2) edge node { } (6)
       (3) edge node { } (4)
       (4) edge node { } (5)
       (5) edge node { } (6)
       (3) edge node { } (1)
       ;
     \end{tikzpicture}
   \end{center}

    Each vertex represents a variable $\ulo_{ij}, i \neq j$ in the optimization problem. 
    An edge between node $\ulo_{ij}$ and $\ulo_{kq}$ exists if and only if the constraint $\ulo_{ij} + \ulo_{kq} \leq 1$ is in the linear constraints in ILP \eqref{eq:baseMIP} when we eliminate considering $\ulo_{ii}$ as its optimal value is $0$ for all $i$ trivially.
    
    It suffices to show that $0 < \sv_{12}, \sv_{23}, \sv_{31} \le \st$ at $t = t_0$ implies $\sv_{12}, \sv_{23}, \sv_{31} > 0 ~ \forall t > t_0$.
    Let $\Ulo^*$ denote the optimal solution to ILP \eqref{eq:baseMIP}, and assume $\sv_{12}, \sv_{23}, \sv_{31} >  0$ at time $t = t_0$. Based on Proposition 1 in \cite{Shih16}, our goal is to show
    \begin{equation*}
      \sv_{12} \leq \st \Rightarrow \ulo^*_{12} = 1, 
      \sv_{23} \leq \st \Rightarrow \ulo^*_{23} = 1, 
      \sv_{31} \leq \st \Rightarrow \ulo^*_{31} = 1.
    \end{equation*}    
    
    Without loss of generality (WLOG), we prove that $\ulo^*_{12}=1$ whenever $\sv_{12} \leq \st$. 
    Consider the following three cases when $\sv_{12} \leq \st$:
    \begin{itemize}
      \item $\sv_{23}, \sv_{31} \leq \st$: 
        By \eqref{eq:rcm_3veh}, we have $\rcm_{12},\rcm_{23},\rcm_{31} > \rcm_{13}^+ + \rcm_{21}^+ +
        \rcm_{32}^+$. From the constraint graph, one can see that the maximum number of non-neighboring variables that can take on values of $1$ is three. Thus it's clear that $\ulo^*_{12}=\ulo^*_{23}=\ulo^*_{31}=1, \ulo^*_{13}=\ulo^*_{21} = \ulo^*_{32}=0$ yields the largest possible objective while being feasible.
      \item Exactly one of the inequalities $\sv_{23} \leq \st$, $\sv_{31} \leq \st$ is true: Assume WLOG that $\sv_{23}
        \leq \st$ and $\sv_{31} > \st$. This gives us $\ulo_{31}^{*} = 0$. With $\ulo_{31}^{*} = 0$, regardless of the values of $\rcm_{13}, \rcm_{21}, \rcm_{32}$, we always have $\ulo_{12}^*=\ulo_{23}^*=1$. This is because first, $\rcm_{12} + \rcm_{23}$ is always greater than $\rcm_{12}$ or $\rcm_{23}$ alone as they are both positive. Second, $\rcm_{12} + \rcm_{23}$ is also always greater than the sum of any feasible combination of $\rcm_{13},\rcm_{21},\rcm_{32}$ or the sum of exactly one of $\rcm_{12}, \rcm_{23}$ plus any feasible combination of $\rcm_{13},\rcm_{21},\rcm_{32}$ due to the condition that $\rcm_{12}, \rcm_{23} > \rcm_{13}^+ + \rcm_{21}^+ +\rcm_{32}^+$.
      \item $\sv_{23},\sv_{31} > \st$: We have $\ulo^*_{23} = \ulo^*_{31} = 0$. 
        Based on \eqref{eq:rcm_3veh}, it's clear that the optimizer always has $\ulo^*_{12} = 1$ because regardless of the values of $\rcm_{13},\rcm_{21}, \rcm_{32}$, $\rcm_{13}^+ + \rcm_{21}^+ + \rcm_{32}^+$ is always less than $\rcm_{12}$.
    \end{itemize}

  In summary, when $\sv_{12} \leq \st$, we always have $\ulo^*_{12} = 1$. By a similar argument, $\sv_{23} \leq \st \Rightarrow \ulo^*_{23} = 1$ and $\sv_{31} \leq \st \Rightarrow \ulo^*_{31} = 1$ hold.

\end{proof}

 \begin{rmk}
    One can notice that the specific numeric choice of $\rcm_{ij}$'s presented in \cite{Shih16} satisfies the newly proposed general guideline in this paper and is thus a special case of the guideline.
  \end{rmk}

\section{Numerical Simulations}
\label{sec:simulation}
We demonstrate our proposed approach on safe planning and control for multiple vehicles, each with an objective of visiting multiple targets, in simulation. We show that our approach enables guaranteed safety for $N=4$ vehicles without the need to remove any vehicle in the environment like it is assumed in \cite{Chen17}. In addition, we also demonstrate that our approach scales easily to large number of vehicles by demonstrating it on $N=15$ vehicles. In all our simulations, we divide vehicles into $\numcluster=3$ clusters and build on the $3$-vehicle collision avoidance algorithm in \cite{Shih16}. 

For illustration purposes, we assumed that the dynamics of each vehicle $\veh_i$ is given by
\vspace{-0.5em}
\begin{equation}
\dot{p}_{x,i} = v \cos \theta_i, \text{ } \dot{p}_{y,i} = v \sin \theta_i, \text{ } \dot{\theta}_i = \omega_i, \quad |\omega_i| \le \bar{\omega}
\vspace{-0.5em}
\end{equation}
\noindent where the state variables $p_{x,i}, p_{y,i}, \theta_i$ represent the $x$ position, $y$ position, and heading of vehicle $\veh_i$. Each vehicle travels at a constant speed of $v=5$, and chooses its turn rate $\omega_i$, constrained by maximum $\bar{\omega}=1$. The danger zone for HJ computation between $\veh_i$ and $\veh_j$ is defined as
\begin{equation}
\targetset_{ij} = \{x_{ij}: (p_{x,i} - p_{x,j})^2 + (p_{y,i} - p_{y,j})^2 \leq R_c^2\},
\end{equation}
\noindent whose interpretation is that $\veh_i$ and $\veh_j$ are considered to be in each other's danger zone if their positions are within $R_c$ of each other. Here, $x_{ij} = [p_{x,ij}, p_{y,ij}, \theta_{ij}]=[p_{x,i} - p_{x,j}, p_{y,i} - p_{y,j}, \theta_i - \theta_j]$. The danger zone can be equivalently defined by the L-2 norm of the $x$ and $y$ components of the states, i.e., $x_{ij} \in \mathcal{Z}_{ij}$ if and only if $d(x_{ij,a}) = \norm{x_{ij,a}}_2 \leq R_{c}$.

To obtain safety levels and the optimal pairwise safety controller, we compute the BRS \eqref{eq:brs} with the relative dynamics
\vspace{-0.5em}
\begin{equation}
  \label{eq:dyn_ij}
  \begin{aligned}
  \dot{q}_{x, ij} &=  -v + v \cos q_{\theta,ij} + \omega_i q_{y, ij} \\
  \dot{q}_{y, ij} &= v \sin q_{\theta,ij} - \omega_i q_{x, ij} \\
  \dot{q}_{\theta,ij} &= \omega_j - \omega_i, \quad |\omega_i|, |\omega_j| \le \bar{\omega}
  \end{aligned}
  \vspace{-0.5em}
\end{equation}
where $[q_{x,ij},q_{y,ij}]$ is $[-p_{x,ij}, -p_{y,ij}]$ rotated clockwise by $\theta_{i}$ around the origin on the 2D plane and $q_{\theta,ij}=-\theta_{ij}$. Note that the L-2 norm on $[q_{x,ij}, \text{ } q_{y,ij}]$ is the same as the L-2 norm on $[p_{x,ij}, p_{y,ij}]$ because changing the sign and rotating do not change the value of the norm so we could have similarly defined the danger zone as $\bar{\mathcal{Z}}_{ij} = \set{ \bar{x}_{ij}: \norm{[q_{x,ij}, \text{ } q_{y,ij}]}_2 \leq R_c }$ where $\bar{x}_{ij}=[q_{x,ij}, q_{y,ij}, q_{\theta,ij}]$.

For all simulation, we initialize all vehicles and states of the clusters such that any pair of vehicles in the same cluster is of distance greater than $R_c$ of each other and the pairwise safety levels of any two distinct clusters based on the \textit{augmented danger zones} between them are all above the safety threshold $\st = 1.5$. 

In Figure \ref{fig:four_veh}, we provide snapshots of the simulation of our proposed approach on $4$ vehicles in an environment with $4$ targets. In this simulation, the set of targets each vehicle needs to visit is $\veh_1: [A, D]$, $\veh_2: [B]$, $\veh_3: [C]$, $\veh_4: [D]$. By using our proposed team assignment algorithm presented in Section \ref{sec:cluster_assign}, the three clusters $\cluster_1$, $\cluster_2$, $\cluster_3$ have the following vehicles assigned to them, $\cluster_1: \veh_1, \veh_4$, $\cluster_2: \veh_2$, $\cluster_3: \veh_3$. Recall that the set of targets for each cluster is the union of the targets of all vehicles in the cluster. Hence $\cluster_1$ should visit targets $[A, D]$, $\cluster_2$ should visit target $[B]$, and $\cluster_3$ should visit target $[C]$. We see that the team assignment algorithm offers a solution such that no clusters have to visit more than $2$ targets to encourage efficient completion of the objectives of all vehicles. If $\veh_1$ was paired with either $\veh_2$ or $\veh_3$ instead, one cluster would have to visit $3$ targets.

In this simulation, the danger zone radius is $R_c = 3$. For cluster $\cluster_1$, we choose the state $x_{1}$ of vehicle $\veh_1$ to be identical to $x_{\cluster_1}$, the state of the \textit{imaginary} vehicle representing the cluster, and choose $\veh_4$ to be at a distance of $R_c + \epsilon$ from the cluster center where $\epsilon$ is a small positive real number. Hence $R_{\cluster_1} = \underset{i \in \set{1, 4}}{max} \text{ } d(x_{\cluster_1}- x_i) = 3 + \epsilon$. For clusters $\cluster_2$ and $\cluster_3$, the state of the \textit{imaginary} vehicle is the state of the only vehicle in each cluster, i.e., $x_{\cluster_2} = x_2, x_{\cluster_3} = x_3$. Hence $R_{\cluster_2} = R_{\cluster_3} = 0$. For each cluster $\cluster_k$, a circle with radius $R_{\cluster_k}$ centered at $x_{\cluster_k}$ is plotted if $R_{\cluster_k} > 0$. We also plot the $0$-safety level reachable sets derived from the \textit{augmented danger zones} of the clusters around the cluster centers. We can see from the top two subplots in Figure \ref{fig:four_veh} that the $0$-safety level sets corresponding to $V_{\cluster_1 \cluster_2}$ and $V_{\cluster_3 \cluster_1}$ are greater than that of $V_{\cluster_2 \cluster_3}$ because the radii $R_{\cluster_{1} \cluster_{2}} , R_{\cluster_3 \cluster_1}$ that define their \textit{augmented danger zones} are $R_{\cluster_{1} \cluster_{2}} = R_{\cluster_3 \cluster_1} = R_{\cluster_1}+R_{\cluster_2}+R_c=R_{\cluster_3}+R_{\cluster_1}+R_c = 6 + \epsilon$ while the radius $R_{\cluster_2 \cluster_3}$ defining the \textit{augmented danger zone} between $\cluster_2$ and $\cluster_3$ is $R_{\cluster_2 \cluster_3} = R_{\cluster_2}+R_{\cluster_3}+R_c = 3$. 

In Figure \ref{fig:four_veh}, as the clusters move towards their first targets, they get into potential conflicts with each other. Hence the safety control kicks in. After each cluster successfully resolves the conflict, $\cluster_2$ heads to target $B$, $\cluster_3$ heads to $C$, and $\cluster_1$ first goes to target $D$, followed by target $A$. At time $t=14.5$s, we see that all vehicles have completed their objectives. Note that once a vehicle has visited all its targets, it remains at its last visited target and is no longer considered for collision avoidance. 

\begin{table}[h]
\centering
\resizebox{\columnwidth}{!}{
\begin{tabular}{ |c|c|c|c|  }
 \hline
 Vehicle & Vehicle Targets & Cluster & Cluster Targets \\
 \hline
    $\veh_6$   & [F, G, H] &  &  \\
    $\veh_7$ &  [H, I]  &   & \\
    $\veh_8$ &  [H, I, J]  &  $\cluster_1 (red)$ &  [F, G, H, I, J, M] \\
    $\veh_{10}$ &  [I, M]  &   & \\
    $\veh_{14}$ &  [J]  &   & \\
    \hline
    $\veh_1$ &  [A, C, E]  &   & \\
    $\veh_2$ &  [A, C]  &   & \\
    $\veh_4$ &  [B, C, D]  &  $\cluster_2 (green)$  & [A, B, C, D, E, G] \\
    $\veh_5$ &  [B, E]  &   & \\
    $\veh_{9}$ &  [B, D, G]  &   & \\
    $\veh_{15}$ &  [C, E]  &   & \\
    \hline
    $\veh_{3}$ & [P, K, O]   &   & \\
    $\veh_{11}$ & [P]   &   $\cluster_3 (blue)$ & [P, A, F, K, O, N] \\
    $\veh_{12}$ & [A, F]   &   & \\
    $\veh_{13}$ & [O, N]   &   & \\
 \hline
\end{tabular}}
\label{tab:fifteen_veh_details}
\caption{This table summarizes the targets for each vehicle, the cluster each vehicle is assigned to based on the proposed cluster assignment algorithm, and the targets that each cluster should visit for the $15$-vehicle collision avoidance problem. We see that the solution to the cluster assignment successfully minimized the maximum number of targets each cluster needs to visit and load balances it so that each cluster needs to visit the same number of targets.}
\end{table}

We demonstrate the scalability and effectiveness of our proposed method with a simulation on getting 15 vehicles to complete their objectives where there are 16 targets in the environment. In this simulation, the danger zone radius is $R_c = 2$. The targets of each vehicle and the cluster assignments from running our proposed team assignment algorithm are summarized in Table 1. We see that our proposed assignment algorithm successfully divides the vehicles into three clusters such that the number of targets each cluster needs to visit is well-balanced. Each cluster visits the targets in the order under the column ``Cluster Targets" in Table 1. The top left graph in Figure \ref{fig:fifteen_veh} shows the starting configuration of the vehicles where the initialization scheme is similar to that explained for the four-vehicle simulation: cluster $\cluster_1$ (red) has its center at $\veh_{10}$, i.e., $x_{\cluster_1} = x_{\veh_{10}}$ and the rest of the vehicles in the cluster are located at equal distance to each other on a circle of radius $R_c=2+\epsilon$ centered at the cluster center. Similarly, for cluster $\cluster_2$ (green) and $\cluster_3$ (blue), the cluster center is located at where vehicles $\veh_9$ and $\veh_{13}$ are at respectively, and the rest of the vehicles in each cluster are located at equal distance to each other on a circle of radius $R_c=2+\epsilon$. In general, we make the state of the imaginary vehicle representing the cluster identical to the state of the vehicle that completes its objective last in the cluster. We see that our proposed method resolves all conflicts and all $15$ vehicles complete their objectives of visiting their targets while maintaining safety successfully. 

For the 15-vehicle simulation, it takes on average 0.018 seconds to perform computation at each time step. All computations were done on a MacBookPro 15.1 laptop with an Intel Core i7 processor.

\begin{figure}[]
\centering
  \begin{subfigure}[b]{0.23\textwidth}
    \includegraphics[width=\textwidth]{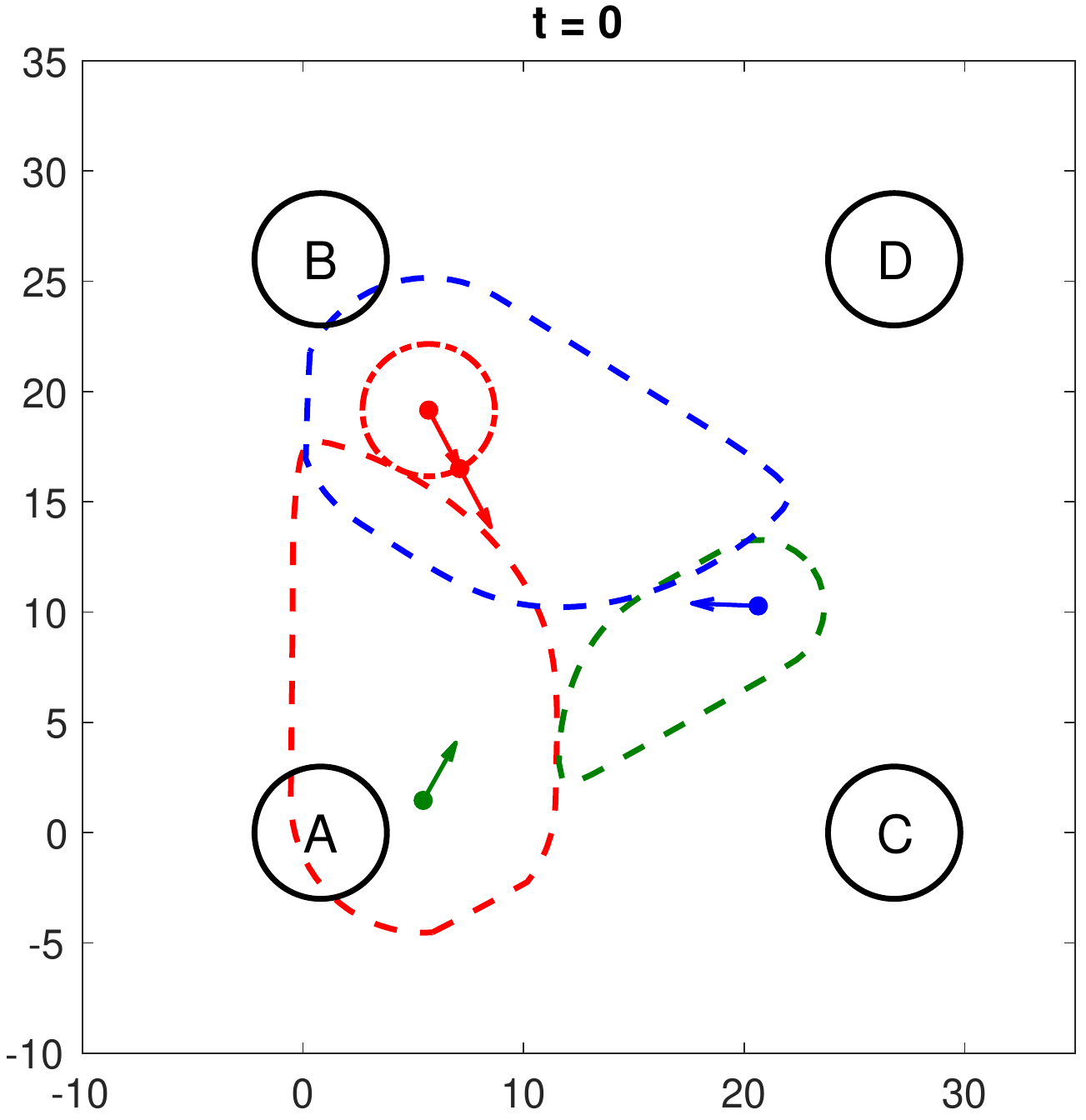}
  \end{subfigure}
  \begin{subfigure}[b]{0.23\textwidth}
    \includegraphics[width=\textwidth]{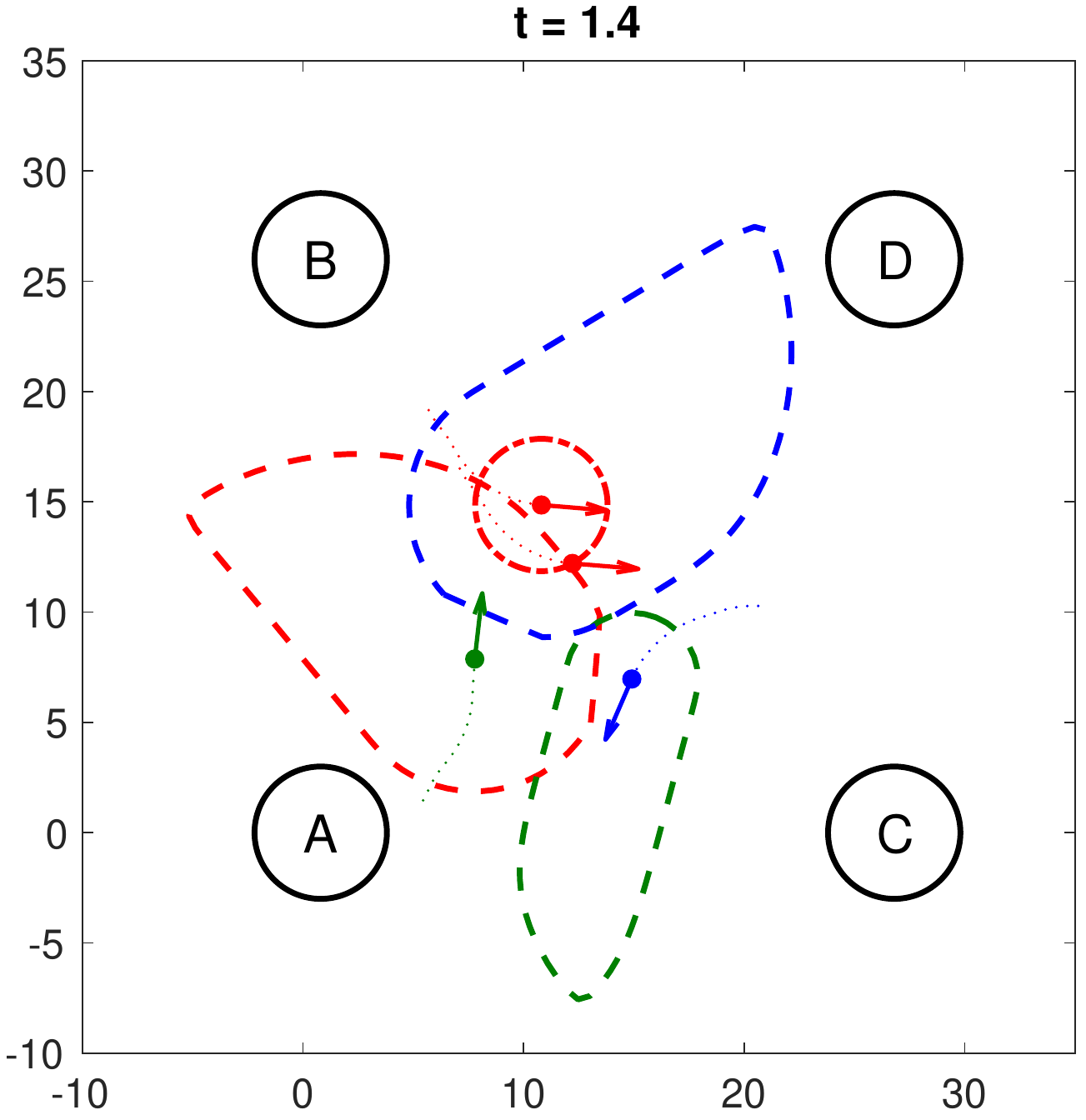}
  \end{subfigure}
  \\
  \begin{subfigure}[b]{0.23\textwidth}
    \includegraphics[width=\textwidth]{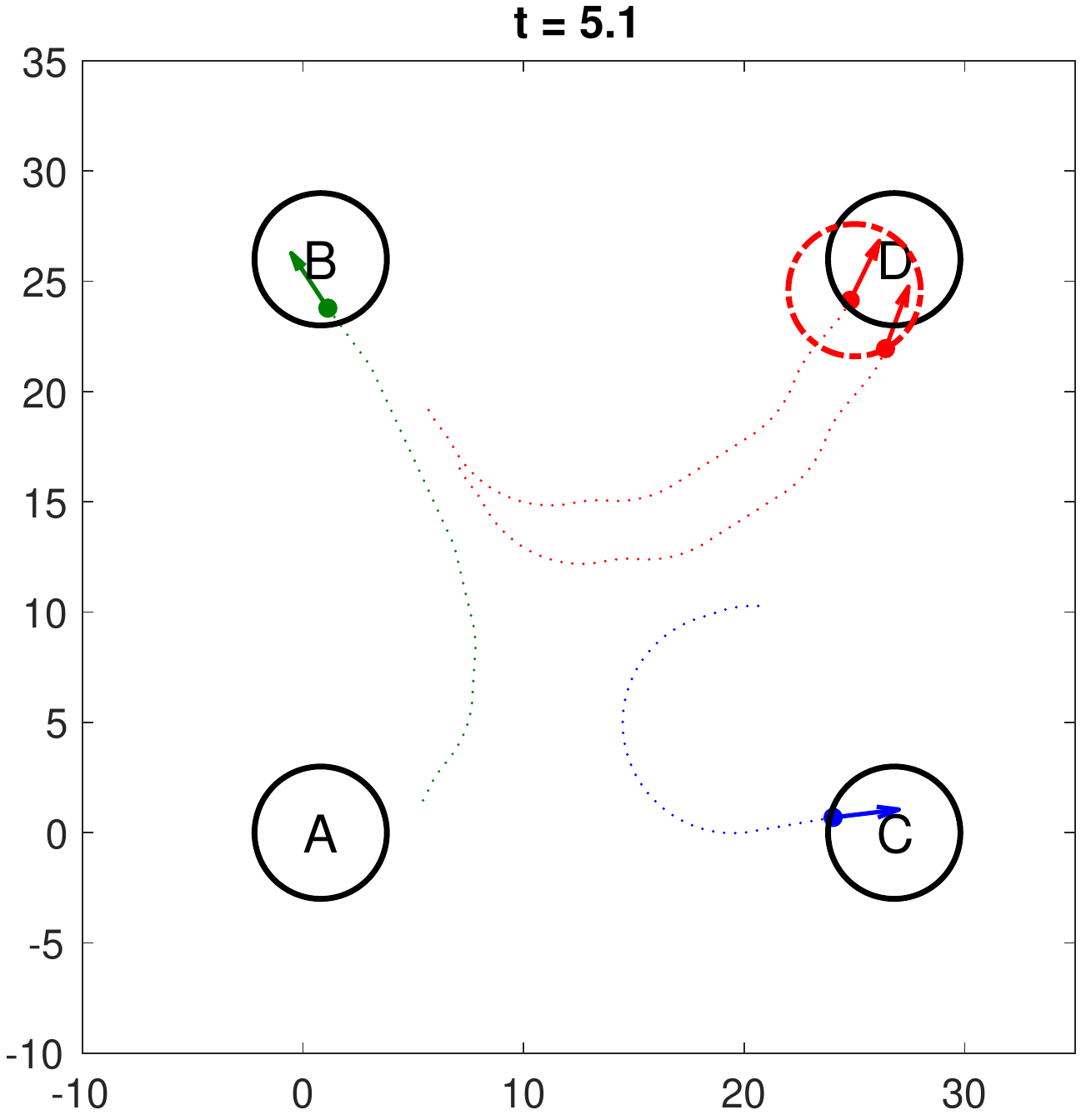}
  \end{subfigure}
  \begin{subfigure}[b]{0.23\textwidth}
    \includegraphics[width=\textwidth]{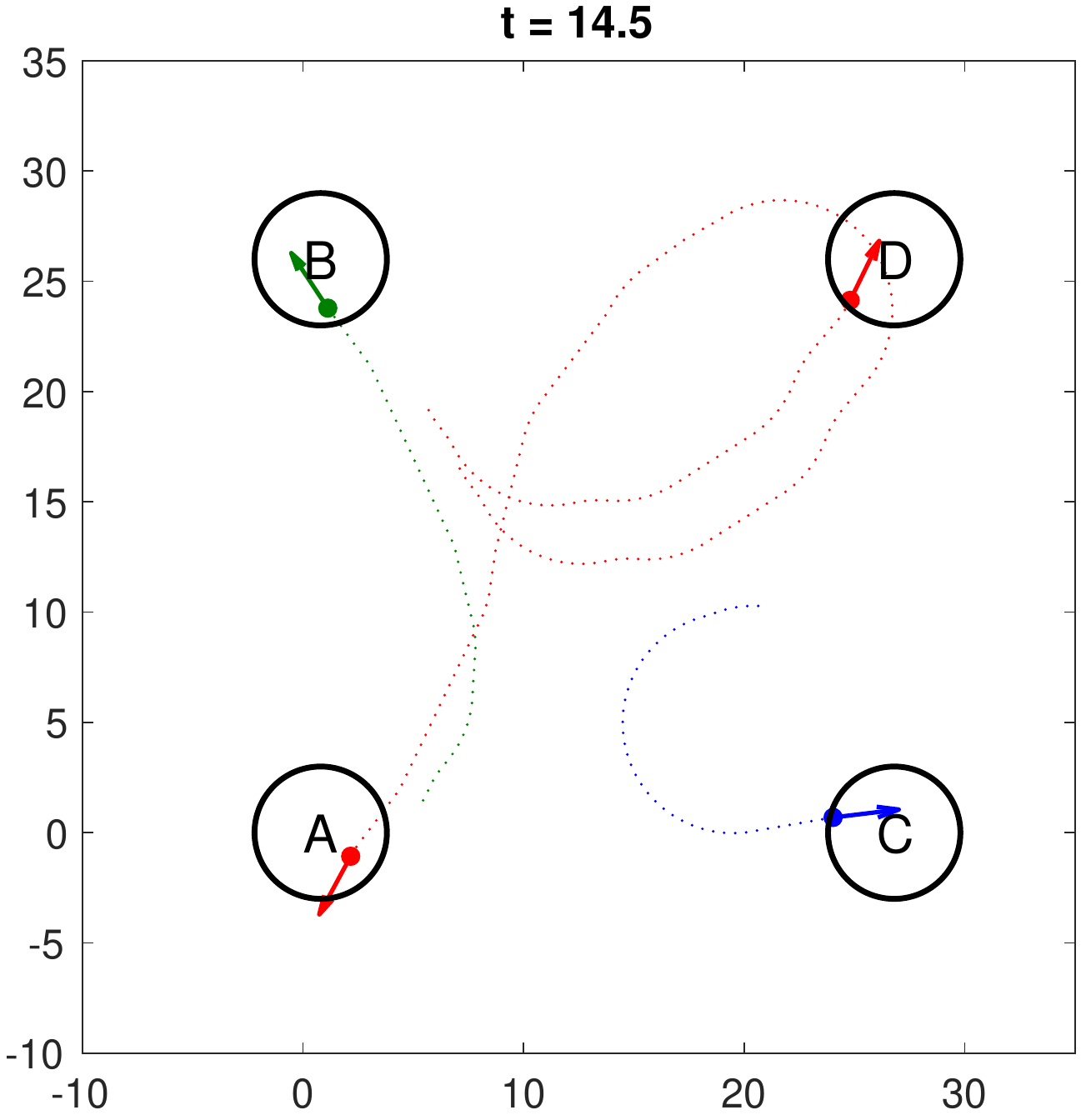}
  \end{subfigure}
  \\
  \begin{subfigure}[b]{0.15\textwidth}
    \includegraphics[width=\textwidth]{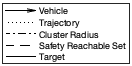}
  \end{subfigure}
  \caption{Four vehicles $\veh_1, \veh_2, \veh_3, \veh_4$ are tasked with visiting their targets. Based on their targets, the team assignment optimization problem described in Section \ref{sec:cluster_assign} assigns $\veh_1$ and $\veh_4$ to cluster $\cluster_1$ (red), $\veh_2$ to cluster $\cluster_2$ (green), and $\veh_3$ (blue) to cluster $\cluster_3$. At $t=1.4$s, the clusters get into potential conflicts with each other and the safety control strategy kicks in to make sure each vehicle remains safe. 
  At $t=14.5$s, we see that each vehicle completes visiting all their targets successfully without any collisions.}
  \label{fig:four_veh}
  \vspace{-0.5cm}
\end{figure}

\begin{figure}[]
\centering
  \begin{subfigure}[b]{0.23\textwidth}
    \includegraphics[width=\textwidth]{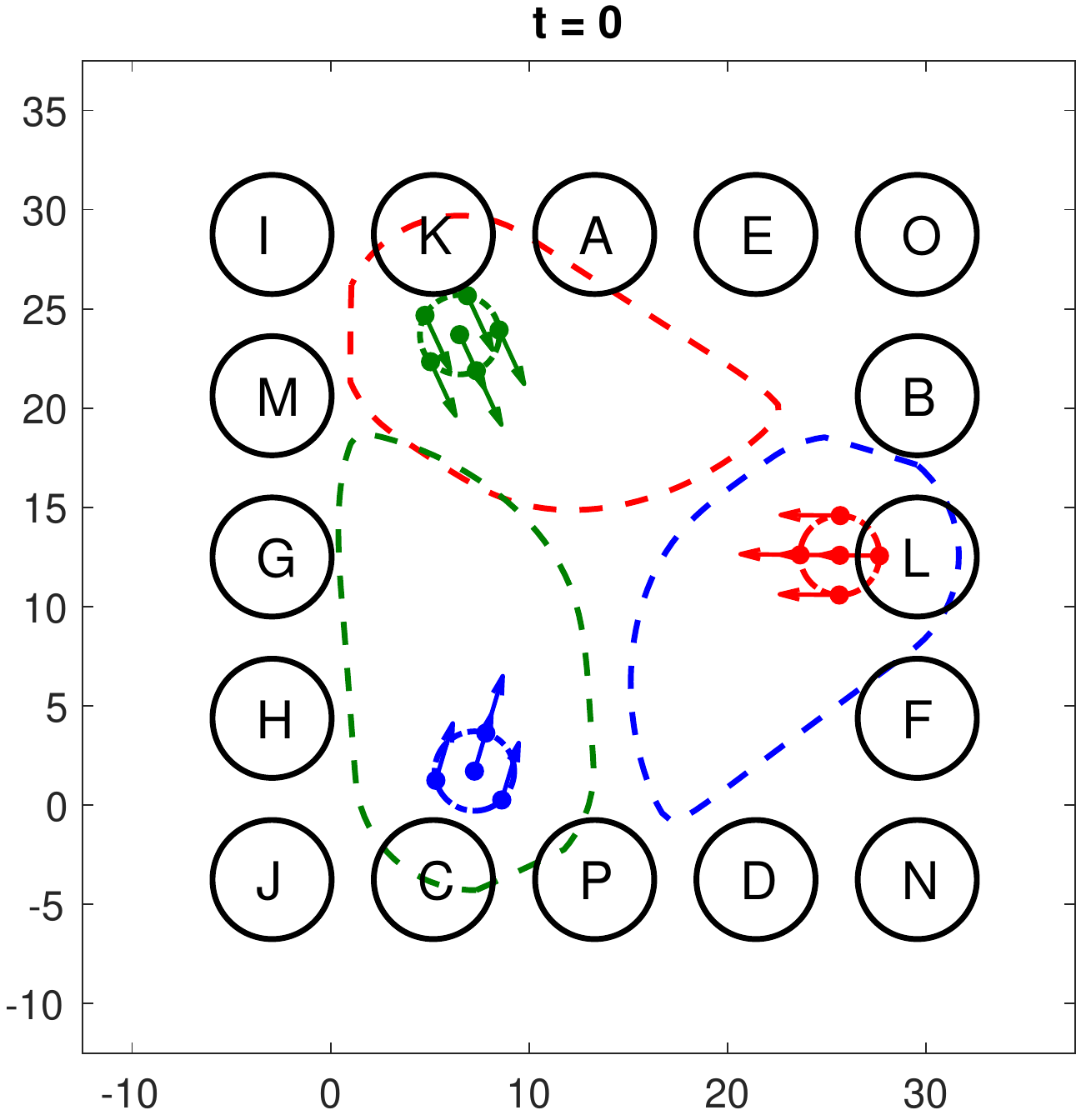}
  \end{subfigure}
  \begin{subfigure}[b]{0.23\textwidth}
    \includegraphics[width=\textwidth]{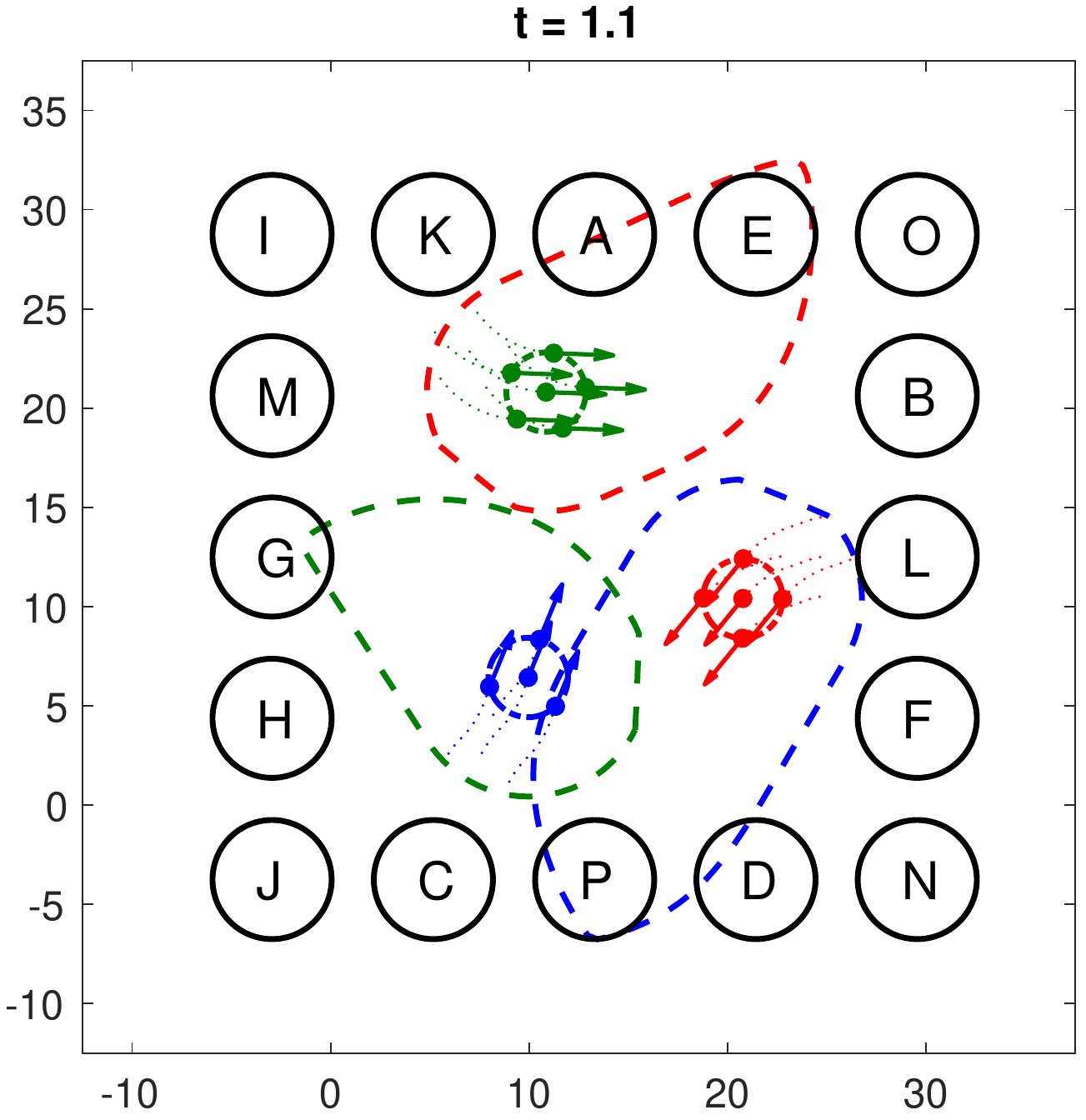}
  \end{subfigure}
  \\
  \begin{subfigure}[b]{0.23\textwidth}
    \includegraphics[width=\textwidth]{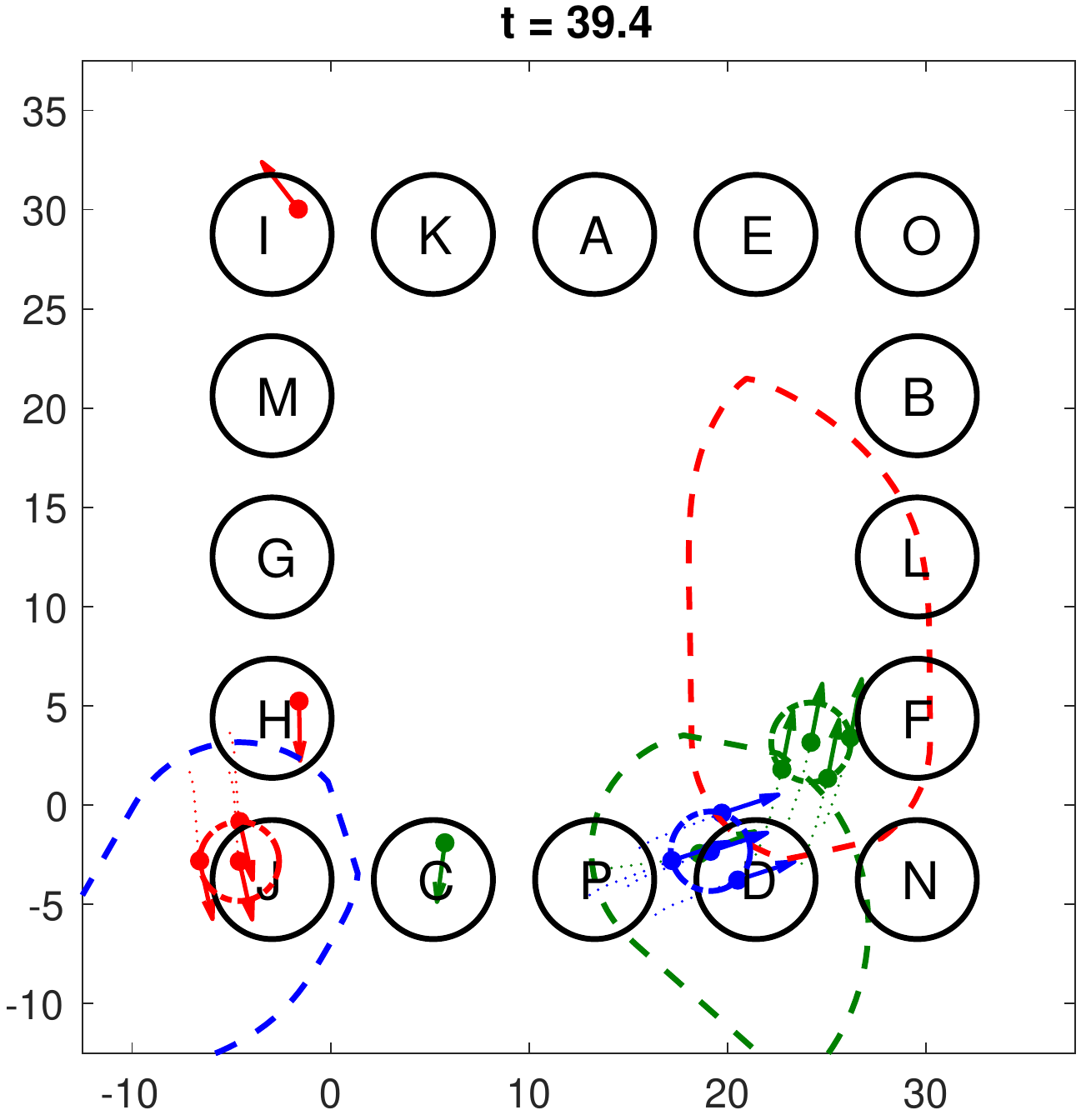}
  \end{subfigure}
  \begin{subfigure}[b]{0.23\textwidth}
    \includegraphics[width=\textwidth]{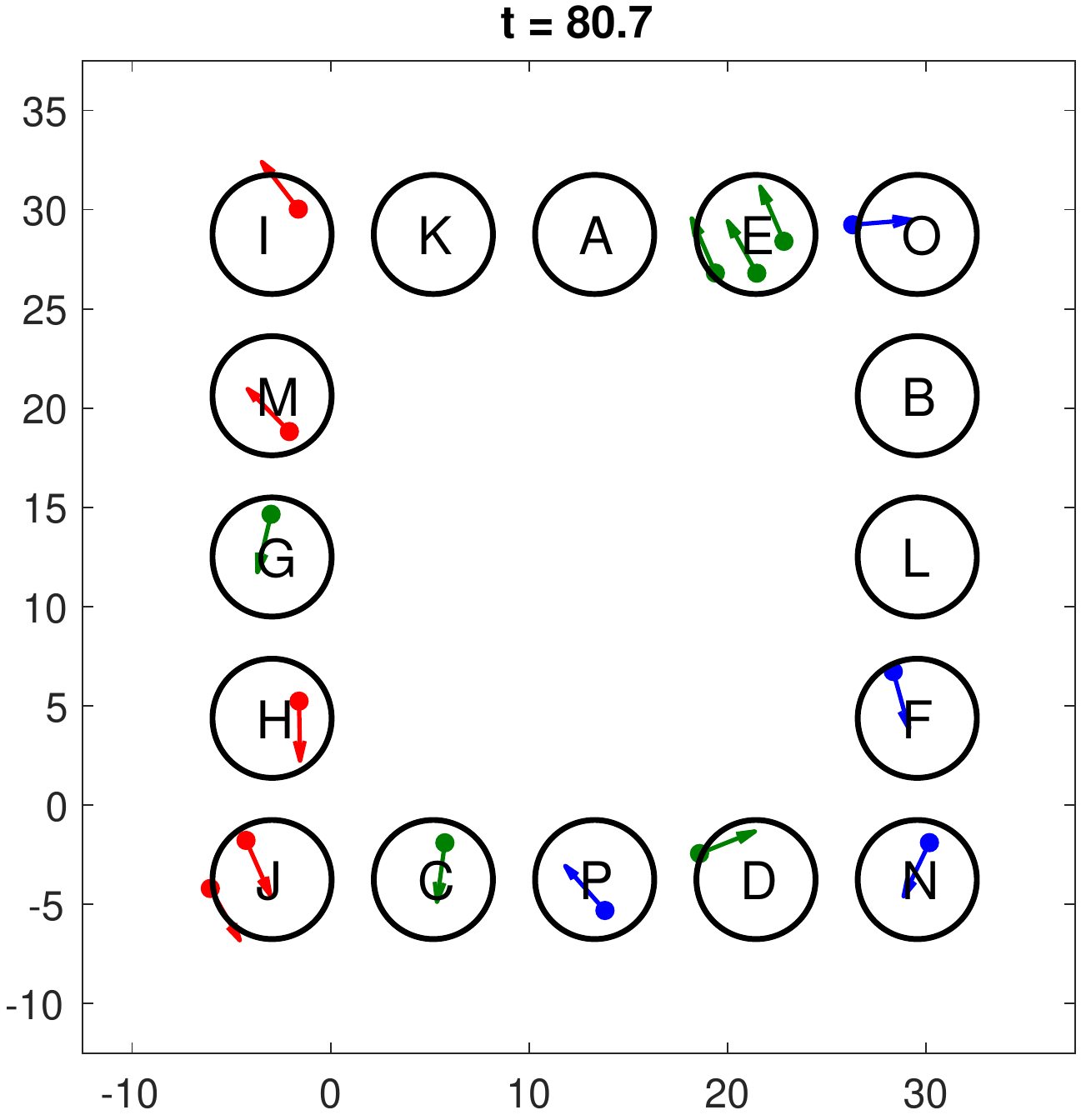}
  \end{subfigure}
  \caption{In this figure, we demonstrate our approach on $15$ vehicles. The vehicles are assigned into three cluster, with cluster $\cluster_1$ (red) having 5 vehicles, cluster $\cluster_2$ (green) having 6 vehicles, and $\cluster_3$ having $4$ vehicles. We can see that the clusters resolve conflicts with each other successfully while they are en route to their targets. At the end, we see that all vehicles safely visited all their targets.} 
  \label{fig:fifteen_veh}
  \vspace{-0.75cm}
\end{figure}

\section{Conclusion and Future Work}
In this paper, we proposed a novel method for \textit{any} number of vehicles to complete their objectives of visiting multiple targets with guaranteed safety for a class of dynamical systems. We demonstrate the effectiveness and scalability of our approach through a $15$-vehicle simulation. 
Future work includes optimizing the order in which the targets are visited if target locations are known \textit{a priori} and developing guaranteed safe control strategies that require less synchronous actions among groups of vehicles for \textit{any} number of vehicles.

 \bibliographystyle{IEEEtran}
 \bibliography{references}
  
 \clearpage

\end{document}